\newtheorem{theorem}{Theorem}
\newtheorem{definition}{Definition}
\newtheorem{property}{Property}
\titlespacing{\section}{0pt}{4pt plus 4pt minus 2pt}{4pt plus 4pt minus 2pt}
\titlespacing{\subsection}{0pt}{2pt plus 0pt minus 2pt}{2pt plus 0pt minus 2pt}
\title{Continuous Domain Generalization}
\author{
\begin{tabular}{c}
\textbf{Zekun Cai}$^{1,4}$ \quad
\textbf{Yiheng Yao}$^{2}$ \quad
\textbf{Guangji Bai}$^{3}$ \quad
\textbf{Renhe Jiang}$^{2*}$ \\
\textbf{Xuan Song}$^{1*}$ \quad
\textbf{Ryosuke Shibasaki}$^{4}$ \quad
\textbf{Liang Zhao}$^{3}$
\end{tabular} \\
$^{1}$Jilin University, Changchun, China \quad
$^{2}$The University of Tokyo, Tokyo, Japan \\
$^{3}$Emory University, Atlanta, GA, USA \quad
$^{4}$LocationMind, Tokyo, Japan \\
\texttt{\{caizekun,songxuan\}@jlu.edu.cn, yihengyao@g.ecc.u-tokyo.ac.jp} \\
\texttt{\{guangji.bai,liang.zhao\}@emory.edu, \{jiangrh,shiba\}@csis.u-tokyo.ac.jp}
}
\begin{document}

\maketitle

\renewcommand{\thefootnote}{}
\footnotetext{* Corresponding author}
\renewcommand{\thefootnote}{\arabic{footnote}}

\begin{abstract}
Real-world data distributions often shift continuously across multiple latent factors such as time, geography, and socioeconomic contexts. However, existing domain generalization approaches typically treat domains as discrete or as evolving along a single axis (e.g., time). This oversimplification fails to capture the complex, multidimensional nature of real-world variation. This paper introduces the task of Continuous Domain Generalization (CDG), which aims to generalize predictive models to unseen domains defined by arbitrary combinations of continuous variations. We present a principled framework grounded in geometric and algebraic theories, showing that optimal model parameters across domains lie on a low-dimensional manifold. To model this structure, we propose a Neural Lie Transport Operator (NeuralLio), which enables structure-preserving parameter transitions by enforcing geometric continuity and algebraic consistency. To handle noisy or incomplete domain variation descriptors, we introduce a gating mechanism to suppress irrelevant dimensions and a local chart-based strategy for robust generalization. Extensive experiments on synthetic and real-world datasets, including remote sensing, scientific documents, and traffic forecasting, demonstrate that our method significantly outperforms existing baselines in both generalization accuracy and robustness. Code is available at: \url{https://github.com/Zekun-Cai/NeuralLio}.
\end{abstract}

\section{Introduction}
Distribution shift refers to changes in data distributions between training and deployment environments, where the input-output relationship learned during training no longer holds at test time. This discrepancy fundamentally compromises the reliability of learned models and motivates the study of Domain Generalization (DG) \cite{wang2022generalizing,tremblay2018training,huang2020self,du2020learning,qiao2020learning}, which aims to train models on multiple source domains that generalize to unseen domains without accessing target-domain data. While early DG methods treat domains as independent entities, recent work recognizes that distribution shifts over time follow evolutionary patterns and thus leverages temporal information to capture such regularities. This perspective has led to the Temporal Domain Generalization (TDG) \cite{li2020sequential,nasery2021training,qin2022generalizing,bai2023temporal,yong2023continuous,zeng2024generalizing,cai2024continuous}, which models domain evolution along a temporal axis and seeks to generalize to future domains.

A foundational modeling assumption in TDG is that domain evolution can be projected onto a single latent axis, typically instantiated as time. This simplification reduces domain generalization to a model extrapolation problem, enabling the use of well-established sequence modeling techniques. For instance, state transition matrices simulate domain progression \cite{qin2022generalizing,zeng2024generalizing}; recurrent neural networks capture temporal dependencies \cite{bai2023temporal}; and ordinary differential equations model the continuous-time domain dynamics \cite{cai2024continuous,zeng2024latent}. While the temporal formulation offers tractability, reducing domain evolution to a unidimensional sequence imposes a fundamental limitation. In practice, distribution shifts rarely unfold along a single direction but are instead continuously driven by multiple interacting factors such as climate conditions, socioeconomic status, infrastructure development, and population structure, each governing one aspect of the generative process. These factors jointly define a continuous variation space, over which distribution shifts co-evolve smoothly. Collapsing such evolution onto a single temporal axis inevitably induces information loss and structural distortion. 

\begin{wrapfigure}{l}{0.45\textwidth}
    \centering
    \vspace{-8pt}
    \includegraphics[width=0.45\textwidth]{./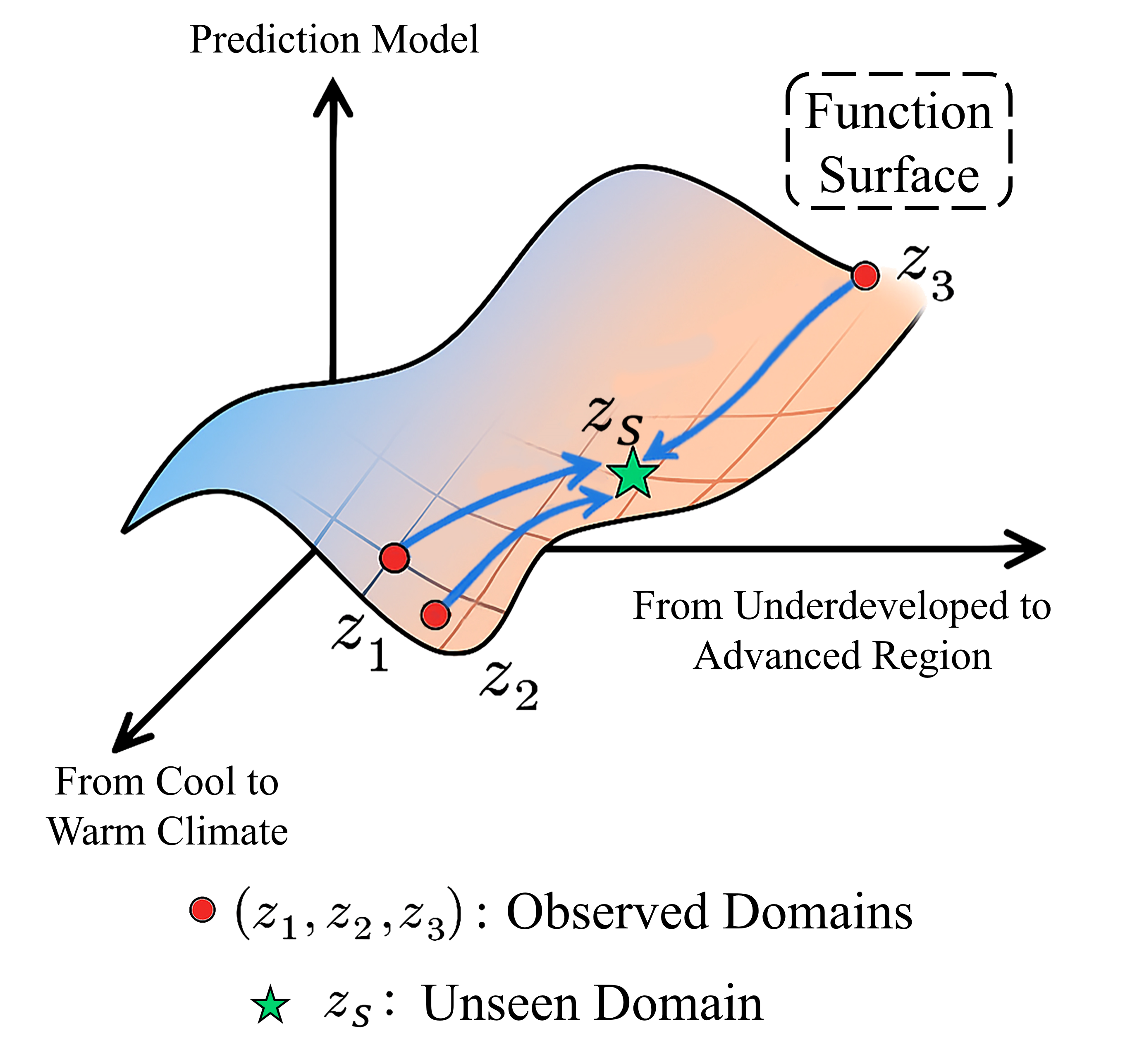}
    \caption{Illustration of continuous domain generalization. Real-world distributions are shaped by continuously varying factors. The observed domains provide sparse supervision over the joint variation-model space. Continuous domain generalization studies how predictive functions evolve over this space, enabling the model to generalize to unseen domains across the underlying continuous fields.}
    \label{fig:cdg_concept}
\end{wrapfigure}

Formally, this paper introduces a problem termed Continuous Domain Generalization (CDG), which aims to generalize predictive models across domains that evolve over an arbitrary continuous space. This formulation captures a more general and realistic setting where domain variation arises continuously from multiple underlying factors. For instance, as shown in Fig.~\ref{fig:cdg_concept}, in remote sensing, image semantics vary with seasonal climate and regional development level, which jointly shape land appearance: climate governs vegetation coverage and coloration, and development determines infrastructure density and land use. A collection of observed domains, spanning combinations such as urban areas in winter or rural areas in summer, provides supervision to calibrate predictive models across the variation space. A new region with moderate development and autumn conditions naturally resides within this space, and its corresponding predictive model should emerge through evolution along the continuous variation field, yielding predictions that are both semantically coherent and structurally grounded. This mechanism similarly governs many real-world applications. In healthcare, patient distributions may vary continuously with age, blood pressure, glucose, cholesterol, and body mass index. In urban analytics, regional traffic patterns evolve jointly with time and economic activity density. In visual recognition, image domains shift with illumination, viewpoint, and camera parameters such as focal length. All of these factors continuously alter the underlying decision function.

Continuous domain generalization introduces a novel and broadly applicable formulation. However, it poses three fundamental challenges: \textbf{1) Identifying domain variation to model evolution in general spaces.} Continuous domain generalization requires characterizing how domain variation over a high-dimensional space induces associated transitions in model behavior. However, this relationship remains theoretically unclear, with no principled constraints to ensure the coherent evolution of models across domains. \textbf{2) Modeling without inherent structural priors of domains.} Temporal domain generalization benefits from the chronological structure of domains, which enables the use of mature sequential modeling techniques. In contrast, general continuous space lacks such inherent topology, making it considerably challenging to recover the global structure of domains and to infer continuous variation fields from sparse and irregular observations. \textbf{3) Learning under imperfect representations of domain variation.} While continuous domain generalization leverages readily available contextual variables as descriptor signals to guide generalization, these proxies are often noisy, incomplete, or misaligned with the true underlying variation. Such discrepancies introduce uncertainty into the learned parameter transitions and undermine generalization performance.

To address these challenges, we propose a generic continuous domain generalization framework, which generalizes the model parameters to unseen domains using observed domains across continuous space by transport operator techniques. We theoretically show that the family of domain-optimal models constitutes a low-dimensional geometric manifold, providing a principled foundation for modeling continuous domain generalization. Building on this insight, we integrate Lie Group theory with a neural transport operator to enforce both geometric continuity and algebraic consistency. To ensure robust generalization under real-world imperfections, we incorporate a gating mechanism and a local chart-based strategy. Our framework jointly resolves the theoretical, structural, and practical challenges in continuous domain generalization. Extensive experiments on synthetic and real-world datasets, including remote sensing, scientific documents, and traffic forecasting, demonstrate that our method significantly outperforms existing baselines in both generalization accuracy and robustness.

\section{Related Works}
\textbf{Domain Generalization (DG) and Domain Adaptation (DA).} Domain Generalization (DG) aims to train models on multiple source domains that generalize to unseen ones without access to target data \cite{muandet2013domain,motiian2017unified,li2017deeper,balaji2018metareg,dou2019domain,yu2022deep}. Approaches include data augmentation \cite{tobin2017domain,tremblay2018training,liu2018unified,qiao2020learning,zeng2024generalizing}, domain-invariant representation learning \cite{ganin2016domain,li2017domain,gong2019dlow,lu2022out}, and meta-learning strategies \cite{mancini2018best,li2018learning,du2020learning,cai2023memda,huang2020self}. Domain Adaptation (DA), by contrast, requires access to target data during training, using adversarial training \cite{ganin2016domain,tzeng2017adversarial,mancini2019adagraph,wang2020continuously,xu2023domain} feature alignment to bridge source-target gaps \cite{bousmalis2017unsupervised,he2023domain,Du2021ADARNN,li2022ddg}, and ensemble methods \cite{saito2017asymmetric}. \textit{Despite their effectiveness, these methods assume domains are categorical and independent, overlooking scenarios where domain shifts arise from continuous, structured variation.}

\textbf{Modeling Relationships Across Domains.} Beyond treating domains as independent categories, a line of research seeks to model relationships or structures among domains. Before deep learning, early works modeled intermediate domain shifts via feature flows, such as geodesic flow kernels \cite{gong2012geodesic} and manifold alignment \cite{gopalan2011domain}. These methods relied on handcrafted features and linear subspace, limiting their applicability to modern deep, high-dimensional representations. Temporal Domain Generalization (TDG) \cite{li2020sequential,nasery2021training,qin2022generalizing,bai2023temporal,zeng2024generalizing,cai2024continuous} extends the idea by assuming domains follow a sequential order, enabling recurrent or extrapolative generalization. However, these methods are restricted to the temporal axis and fail to generalize to continuous domains characterized by multi-dimensional, non-sequential variation, limiting their applicability in broader settings. Other approaches leverage partial domain relationships based on domain indices. For instance, AdaGraph \cite{mancini2019adagraph} constructs graphs over domain indices, while CIDA \cite{wang2020continuously} defines domain relations through pairwise distances of domain indices. However, graphs are discrete, and distance-based formulations only induce a metric space, all of which lack the ability to express a global mapping over the domain variation space. \textit{Overall, while these methods capture partial domain relationships, they are limited in scope and fail to capture the full structure of continuous domain variation.}

\textbf{Geometric and Algebraic Structures in Learning.}
Geometric deep learning aims to uncover the low-dimensional structures and regularities underlying real-world data, formulating learning problems through unified geometric and algebraic principles \cite{bronstein2021geometric}. It views data and models as entities living on structured spaces rather than in Euclidean coordinates, allowing learning to exploit intrinsic relationships. A broad spectrum of mathematical tools, including topology, geometry, group theory, graphs, and differential operators, provide rigorous formalisms to express physical and statistical regularities such as symmetry, invariance, equivariance, continuity, and conservation. These foundations have catalyzed major advances in scientific machine learning and physics-informed modeling \cite{raissi2019physics,greydanus2019hamiltonian,champion2019data,finzi2020generalizing,lu2021learning,kovachki2023neural}, enabling neural architectures to obey governing equations and preserve physical symmetries. Building on this foundation, we introduce a structural modeling perspective across domain-specific models for continuous domain generalization.

\section{Problem Definition}

\paragraph{Continuous Domain Generalization (CDG)} CDG addresses the task of generalizing predictive models across domains whose data distributions evolve continuously with respect to underlying latent factors. Formally, let $\{\mathcal{D}_i\}_{i=1}^N$ be a collection of domains, where each domain $\mathcal{D}_i$ consists of a dataset $(X_i, Y_i)$ containing input–output pairs sampled from a distribution over $\mathcal{X} \times \mathcal{Y}$, with $\mathcal{X}$ and $\mathcal{Y}$ denoting the input and output spaces, respectively. In addition, each domain is associated with a descriptor $z_i \in \mathbb{R}^d$, with $z_i$ lying in a continuous descriptor space $\mathcal{Z}$ that captures domain-specific attributes such as spatial, temporal, demographic, economic, or environmental factors. We assume that the underlying data distribution of each domain is continuously governed by its descriptor, that is, the conditional distribution $P(Y \mid X, z_i)$ evolves smoothly over $\mathcal{Z}$ as $z_i$ varies. Accordingly, small perturbations in $z_i$, either along individual coordinates or jointly, induce proportionally small changes in the conditional distribution.

The \emph{goal} of CDG is, during training, we are provided with a set of $N$ domains $\{\mathcal{D}_1, \mathcal{D}_2, \ldots, \mathcal{D}_N\}$, associated with their descriptors $\{z_1, z_2, \ldots, z_N\}$. For each domain $\mathcal{D}_i$, we learn a predictive model $g(\cdot; \theta_i): \mathcal{X} \rightarrow \mathcal{Y}$, where $\theta_i \in \mathbb{R}^D$ denotes the model parameters learned from $\mathcal{D}_i$. We want to learn the co-variations between domain descriptors $\{z_1, z_2, \ldots, z_N\}$ and model parameters $\{\theta_1, \theta_2, \ldots, \theta_N\}$, and to infer the parameters $\theta_s$ corresponding to an unseen domain $\mathcal{D}_s$ given a new descriptor $z_s \notin \{z_1, \ldots, z_N\}$. This enables the learned framework to generalize to arbitrary domains across the descriptor space $\mathcal{Z}$, yielding a predictive model $g(\cdot; \theta_s)$ for any given descriptor $z_s$.

CDG assumes that data distributions evolve continuously, consistent with the continuum hypothesis in physics and engineering, which posits that macroscopic systems change smoothly with respect to their underlying causal factors. Such smooth evolution is widely observed in physical, biological, and socioeconomic systems. Consequently, modeling domain evolution as a continuous process offers a principled and tractable abstraction of real-world dynamics. Nevertheless, generalizing predictive models over such continuous spaces poses significant challenges. First, domain variation and model behavior are often entangled, making it difficult to establish stable correspondences. Second, the absence of inherent chronological organization leaves no on-hand structural guidance for modeling cross-domain relationships. Third, domain-level information is typically noisy, incomplete, or misaligned, introducing uncertainty that undermines reliable generalization. In the following sections, we tackle these challenges sequentially.

\section{Methodology}

In this section, we present our framework for continuous domain generalization through structural model transport. Specifically, we first identify the geometric structure underlying model parameters across domains, and formally prove that the collection of domain-wise parameters forms an embedded submanifold. (Section 4.1). This establishes the geometric foundation for representing model evolution as a continuous mapping on the manifold. We then derive the necessary structural constraints that should be satisfied for model evolution, and propose a Neural Lie Transport Operator (NeuralLio) grounded in Lie Group theory to enable equivariant parameter transitions across domains (Section 4.2). Finally, we handle imperfections in the descriptor space, including degeneracy and incompleteness, using gating and local chart-based modeling (Section 4.3).

\subsection{Identifying the Parameter Manifold of Continuous Domains}
\label{sec:param_manifold}

This subsection provides a theoretical perspective on the geometric structure of optimal model parameters across continuous domains, revealing a smooth and coherent mapping from domain descriptors to model parameters. Preserving this structure is fundamental for continuous generalization.

\begin{theorem}[Parameter Manifold]
    In continuous domain generalization, the function mapping domain descriptors to their corresponding predictive model parameters, $\theta(z): \mathcal{Z} \to \Theta$, is continuous. Moreover, let $\mathcal{Z}\subseteq\mathbb{R}^d$ denote a theoretical descriptor space that provides a complete and non-degenerate representation of all latent factors governing domain variation, then the image set \footnote{The image set of a function refers to the set of all values it outputs.} $\mathcal{M}:= \{ \theta(z) \mid z \in \mathcal{Z} \}$ forms a $d$-dimensional embedded submanifold of $\mathbb{R}^D$.
    \label{thm:param_manifold}
\end{theorem}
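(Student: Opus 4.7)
The plan is to decompose the statement into two subgoals: (i) establish that the descriptor-to-parameter map $\theta : \mathcal{Z} \to \Theta$ is continuous (indeed, smooth under mild regularity), and (ii) upgrade this to the embedded-submanifold claim by verifying that $\theta$ is a smooth embedding whose image has dimension $d$. Throughout, I view $\theta(z)$ as defined by the population risk
\[
    \theta(z) \in \arg\min_{\theta \in \Theta}\, \mathcal{L}(\theta, z), \qquad \mathcal{L}(\theta, z) := \mathbb{E}_{P_z}\bigl[\ell(g(X;\theta),\, Y)\bigr],
\]
where $P_z$ is the joint distribution on $\mathcal{X}\times\mathcal{Y}$ implied by the conditional $P(Y\mid X, z)$ already assumed continuous in $z$ by the problem definition.

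For continuity (part i), my plan is to combine smoothness of $P_z$ in $z$ with a local strict-minimum condition on $\mathcal{L}(\cdot, z)$. The cleanest qualitative route is Berge's Maximum Theorem: joint continuity of $\mathcal{L}$ in $(\theta, z)$ together with compactness of the feasible region makes the argmin correspondence upper hemicontinuous, and a unique selection at each $z$ gives continuity of $\theta(\cdot)$. If one strengthens to a nondegenerate second-order condition $\nabla_\theta^2 \mathcal{L}(\theta(z_0), z_0) \succ 0$, then the Implicit Function Theorem applied to the first-order optimality equation $\nabla_\theta \mathcal{L}(\theta(z), z) = 0$ yields continuity and, in fact, full $C^k$ smoothness of $\theta$ on a neighborhood of $z_0$ matching the regularity of $(\theta,z)\mapsto\mathcal{L}(\theta,z)$.

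For the manifold claim (part ii), I would verify the three standard hypotheses of the smooth-embedding criterion. Smoothness of $\theta$ is already in hand from the IFT step. Injectivity is where the \emph{completeness} hypothesis enters: because $z$ encodes every latent factor driving variation in $P_z$, distinct descriptors yield distinct conditional distributions and hence distinct minimizers (after fixing a canonical selection where the argmin is non-unique). The \emph{non-degeneracy} hypothesis translates into the rank condition on the Jacobian $D\theta(z)\in\mathbb{R}^{D\times d}$: if the $d$ coordinates of $z$ were redundant, the implicit Jacobian formula $D\theta(z) = -\bigl[\nabla_\theta^2 \mathcal{L}\bigr]^{-1}\nabla_z \nabla_\theta \mathcal{L}$ would have rank strictly less than $d$, contradicting the assumed non-degeneracy; hence $\theta$ is an immersion. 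Combining injectivity and immersivity with the topological embedding property (secured, e.g., by local compactness of $\mathcal{Z}$ or by working in a chart), the rank theorem yields that $\mathcal{M}=\theta(\mathcal{Z})$ is a $d$-dimensional embedded submanifold of $\mathbb{R}^D$.

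The main obstacle, in my view, is the injectivity/rank step. In overparameterized neural models the argmin is profoundly non-unique, so ``the'' parameter attached to each $z$ must be interpreted as a canonical selection (for instance, the limit of gradient flow from a fixed initialization, or a minimum-norm choice), and both continuity and rank statements must be shown compatible with this selection. Turning the informal phrases \emph{complete} and \emph{non-degenerate} into precise hypotheses about the descriptor space that guarantee the Jacobian rank condition, rather than merely distributional identifiability, is the substantive content beyond the routine applications of Berge and the Implicit Function Theorem.
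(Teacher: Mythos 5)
Your proposal is correct in substance but follows a genuinely different route from the paper. The paper does not define $\theta(z)$ variationally at all: it \emph{posits} that the evolution of the optimal predictor is governed by a continuous vector field $f$ on function space, $\nabla_z g(\cdot;\theta(z)) = f(g(\cdot;\theta(z)),z)$, and then obtains differentiability of $\theta$ by the chain rule together with the Moore--Penrose pseudoinverse of $J_g$, i.e.\ $\nabla_z\theta(z) = J_g(\theta(z))^\dagger f(\cdot,z)$. Injectivity and full rank of $J_z\theta$ are then asserted from the completeness/non-degeneracy hypothesis (distinct $z$ give distinct conditionals, and the parameter is ``uniquely determined'' by the conditional), and the Constant Rank Theorem closes the argument. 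Your Berge-plus-Implicit-Function-Theorem route, applied to $\nabla_\theta\mathcal{L}(\theta(z),z)=0$ with a nondegenerate Hessian, is the more standard and more self-contained derivation: it does not presuppose the existence of the governing field $f$ (which in the paper is essentially an additional modeling axiom), it makes the second-order condition explicit, and it gives the Jacobian formula $D\theta(z) = -[\nabla_\theta^2\mathcal{L}]^{-1}\nabla_z\nabla_\theta\mathcal{L}$ from which the immersion property can actually be checked. What the paper's route buys instead is a direct bridge to the rest of the method --- the vector field $f$ is precisely what NeuralLio later discretizes as a Lie-algebra field --- so the proof doubles as motivation for the transport-operator construction. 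Your closing caveats are well placed and, if anything, sharper than the paper's treatment: the paper's claim that $\theta(z)$ is ``uniquely determined by $P(Y\mid X;z)$'' quietly assumes away the non-uniqueness of minimizers in overparameterized networks, and both arguments (yours via injective immersion, the paper's via the Constant Rank Theorem) still owe the reader the topological-embedding step needed to rule out non-embedded injective immersions; you at least flag where that must be supplied.
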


\begin{proof}[Proof]
Under continuous domain distribution shift, the variation of the distribution can be described by a continuous vector field $f$. Since the optimal predictor $g(\cdot; \theta(z))$ serves as a functional representation of the conditional distribution $P(Y|X; z)$, the evolution of the predictive model function space can be modeled by:
\begin{equation}
    \nabla_z g(\cdot; \theta(z)) = f(g(\cdot; \theta(z)), z),
    \label{eq:g_pde}
\end{equation}
where $\nabla_z$ denotes the gradient operator with respect to $z$.

Applying the chain rule to the above PDE yields:
\begin{equation}
    \nabla_{z}g\bigl(\cdot;\theta(z)\bigr)
    =
    J_{g}\bigl(\theta(z)\bigr)\,\nabla_{z}\theta(z)
    \quad\Longrightarrow\quad
    \nabla_{z}\theta(z)
    =
    J_{g}\bigl(\theta(z)\bigr)^{\dagger}\,f\bigl(g(\cdot;\theta(z)),\,z\bigr),
    \label{eq:g_pde_chain}
\end{equation}
where $J_{g}^\dagger$ denotes the Moore–Penrose pseudoinverse of the Jacobian of $g$ with respect to $\theta$. This shows that $\theta: \mathcal{Z} \to \Theta$ is differentiable, and hence continuous.

If the descriptor space $\mathcal{Z}$ provides a complete and non-degenerate representation of all latent factors governing domain variation \footnote{Such a descriptor space theoretically exists; in practice, observed descriptor serves as approximations without affecting the validity of the theorem.}, then the Jacobian $J_z \theta(z) \in \mathbb{R}^{D \times d}$ is of full rank $d$, and the mapping $z \mapsto P(Y|X; z)$ is injective. Since the model architecture $g$ is fixed, and each domain-specific parameter $\theta(z)$ is uniquely determined by the corresponding conditional distribution $P(Y|X; z)$, the mapping $z \mapsto \theta(z)$ is also injective. Then, by the Constant Rank Theorem, the image set
\begin{equation}
    \mathcal{M} := \{\, \theta(z) \mid z \in \mathcal{Z} \,\}
    \label{eq:theta_mani}
\end{equation}
forms a $d$-dimensional embedded submanifold of $\mathbb{R}^D$.

\end{proof}

Theorem~\ref{thm:param_manifold} carries several important implications as follows:

\begin{property}[Unified Representation]
    Modeling in parameter space yields a unified, comparable representation, abstracted away from task-specific feature engineering.
    \label{rmk:unified}
\end{property}

\begin{property}[Geometric Regularization]
    The manifold structure regularizes the parameter space by constraining optimization to a geometrically valid subset, reducing the searching complexity in $\mathbb{R}^D$.
    \label{prt:regular}
\end{property}

\begin{property}[Analytical Operability]
    The manifold structure enables principled geometric and algebraic operations over model functions, facilitating interpretable and controllable generalization.
    \label{rmk:operability}
\end{property}

\subsection{Neural Domain Transport Operator under Structural Constraints}
\label{sec:structural_constraint}

Eq.~\eqref{eq:g_pde_chain} reveals that a differential equation governs the evolution of model parameters. In the temporal case where the domain descriptor is one-dimensional, prior work~\cite{cai2024continuous} models such evolution with NeuralODEs~\cite{chen2018neural}. A natural extension is to lift the ODE to a PDE framework for multi-dimensional cases. However, this introduces substantial challenges: (1) Classical numerical PDE solvers, such as finite difference or finite element methods, require known values of $\theta(z)$ on a structured grid to specify boundary or initial conditions. However, in continuous domain generalization, the $\theta(z)$ for each training domain is not directly known and must be learned through iterative optimization, leaving such conditions undefined. (2) The training domains are sparsely and irregularly distributed, precluding natural ordering or integration paths, making it infeasible to reduce the PDE to a set of ODEs along any direction. (3) Modern neural PDE solvers, such as Physics-Informed Neural Networks~\cite{raissi2019physics}, rely on symbolic PDE formulations to define residual losses, yet the governing equations describing distributional evolution remain unknown in continuous domain generalization.

\paragraph{Structural Constrained Domain Transport} Rather than solving the PDE or explicitly regressing the parameter field $\theta(z)$, we propose to learn a structure-preserving transport operator $\mathcal{T}: \Theta \times \mathcal{Z} \times \mathcal{Z} \rightarrow \Theta$, which maps parameters at one descriptor $z_i$ to another target descriptor $z_j$. This structural operator circumvents limitations of classical and neural PDE solvers by directly learning pairwise parameter transitions. Specifically, given $\theta(z_i)$ at $z_i$ and the target point $z_j$, the operator produces
\begin{equation}
    \theta(z_j) = \mathcal{T}(\theta(z_i),\, z_i,\, z_j).
    \label{eq:transport_operator}
\end{equation}

To ensure that $\mathcal{T}$ yields meaningful and generalizable parameter transitions, we explore and formulate the necessary conditions that the transport operator needs to satisfy for continuous domain generalization: a geometric continuity structure and an algebraic group structure.

\begin{definition}[Geometric Structure]
\label{def:geometric}
The neural transport operator $\mathcal{T}$ is said to satisfy geometric structure if it is continuous in all of its inputs.
\end{definition}

\begin{definition}[Algebraic Structure]
\label{def:algebraic}
The neural transport operator $\mathcal{T}$ is said to satisfy algebraic structure if the following properties hold:

\begin{itemize}[leftmargin=1.0em, itemsep=0pt, topsep=0pt]
    \item \textbf{Closure:} $\mathcal{T}(\theta(z_i), z_i, z_j) \in \Theta$, ensuring transported parameters remain within the valid space.

    \item \textbf{Identity:} $\mathcal{T}(\theta(z_i), z_i, z_i) = \theta(z_i)$, ensuring that self-transport leaves parameters unchanged.

    \item \textbf{Associativity:} $\mathcal{T}(\mathcal{T}(\theta(z_i), z_i, z_j), z_j, z_k) = \mathcal{T}(\theta(z_i), z_i, z_k)$, ensuring that sequential transports are equivalent to direct ones.

    \item \textbf{Invertibility:} $\mathcal{T}^{-1}\!\mathcal{T}(\theta(z_i), z_i, z_j) = \theta(z_i)$, ensuring that each transport can be exactly reversed.
\end{itemize}

\end{definition}

The geometric structure ensures that $\theta(z)$ forms a smooth manifold, as established in Theorem~\ref{thm:param_manifold}. The algebraic structure defined via \emph{Group}-like axioms imposes \emph{Equivariance}. Equivariance characterizes a symmetry relation between transformations in the input and output spaces. In this context, it ensures that applying a transformation in the descriptor space and then mapping to parameters yields the same result as transforming parameters directly. Without equivariance, the learned operator may produce parameter trajectories that deviate from the true evolution of the underlying data distribution.

\paragraph{Neural Lie Transport Operator} The geometric and algebraic structure jointly imply that the parameter family $\mathcal{M} = \{ \theta(z) \mid z \in \mathcal{Z}\}$ forms \emph{Lie Group}—i.e., a smooth manifold with a compatible group operation. The transport operator induces parameter Lie Group transitions along directions specified by domain descriptors. This motivates us to propose the \textbf{Neural Lie Transport Operator (NeuralLio)}, a learnable operator grounded in Lie theory and parameterized by a neural network. NeuralLio characterizes the local variation of $\theta(z)$ using the Lie algebra $\mathfrak{g}$ generated from the domain descriptor.  The Lie algebra $\mathfrak{g}$ serves as a tangent space that captures the differential structure of the parameter manifold. Given a source descriptor $z_i$ and corresponding parameter $\theta(z_i)$, we generate a Lie algebra element $V(z_i) \in \mathfrak{g}$ using a neural network. For a target descriptor $z_j$, we apply the offset $(z_j - z_i)$ to $V(z_i)$ to transport $\theta(z_i)$ to $\theta(z_j)$:
\begin{equation}
    \theta(z_j) = \mathcal{T}\bigl(\theta(z_i),\,z_i,\,z_j\bigr) = \exp\bigl((z_j - z_i) V(z_i)\bigr) \cdot \theta(z_i),
    \label{eq:exp}
\end{equation}
where the exponential map $\exp: \mathfrak{g} \to \mathcal{M}$ lifts the Lie algebra vector to a valid group transformation.

In practice, the parameter vector $\theta(z) \in \mathbb{R}^D$ is high-dimensional. We first encode it into a compact latent representation $e(z) = f_e(\theta(z)) \in \mathbb{R}^m$ using an encoder $f_e$. We then perform cascaded transport in the latent space. Specifically, we define a set of $d$ neural Lie algebra fields $\{V^1, \dots, V^d\}$, where each $V^k \in \mathbb{R}^{m \times m}$ is produced by a field network $f_v^k: \mathbb{R}^d \rightarrow \mathbb{R}^{m \times m}$ conditioned on the source descriptor $z_i$. Each $f_v^k$ can be instantiated with a generic deep neural network. Given a descriptor shift $(z_j - z_i)$, we compute the cumulative transformation as:

\begin{equation}
    e(z_j) = \left( \prod_{k=1}^d \exp\left( (z_j^k - z_i^k) \cdot f_v^k(z_i) \right) \right) \cdot e(z_i).
    \label{eq:cascaded_exp}
\end{equation}
The latent representation $e(z_j)$ is then decoded by $f_d: \mathbb{R}^m \rightarrow \mathbb{R}^D$ to obtain the final $\theta(z_j)$.

\subsection{Handling Imperfect Descriptors Space}
\label{sec:imperfect_descriptors}
Theorem~\ref{thm:param_manifold} discusses a theoretically grounded descriptor space. In the real world, however, the descriptors accessible for model training are observable projections of this theoretical space. They may contain noise, redundancy, or missing factors, leading to imperfect representations. These imperfections introduce two challenges: (1) Degeneracy: Some directions in the observed $\mathcal{Z}$ fail to affect $\theta(z)$ or redundantly encode variations. (2) Incompleteness: The observed descriptors partially capture the degrees of freedom governing domain shifts, leaving some latent factors unmodeled.

\paragraph{Suppressing Degeneracy via Descriptor Gating} We introduce a gating mechanism to suppress irrelevant or redundant directions in $\mathcal{Z}$. Specifically, we apply a dimension-wise gate to modulate the influence of each feature in $z$, enabling the model to focus on informative directions for parameter variation. The gate consists of two components: a data-dependent gate defined as $\mathbf{g}(z_i) = \mathrm{Sigmoid}(W z_i)$ with $W \in \mathbb{R}^{d \times d}$, and a global trainable gate vector $\mathbf{w} \in \mathbb{R}^d$ shared across all domains. This yields a shared gating vector $\mathbf{m}(z_i) = \mathbf{g}(z_i) \odot \mathbf{w}$ used to modulate both source and target descriptors:
\begin{equation}
    \tilde{z}_i = z_i \odot \mathbf{m}(z_i), \quad 
    \tilde{z}_j = z_j \odot \mathbf{m}(z_i).
    \label{eq:gate}
\end{equation}
where $\odot$ denotes element-wise multiplication. This gating suppresses degenerate directions before computing the transport operator.

\paragraph{Mitigating Incompleteness via Local Chart} When the space $\mathcal{Z}$ is incomplete, learning a globally consistent transport function becomes underdetermined. We adopt a localized modeling strategy grounded in the theory of differential geometry: the parameter manifold is represented as an \emph{atlas}, i.e., a collection of overlapping local \emph{charts}. For each descriptor $z_i$, we define its neighborhood as:
\begin{equation}
    \mathcal{N}(z_i) := \{ z_j \in \mathcal{Z} \mid z_j \text{ is a } k\text{-NN of } z_i \},
    \label{eq:charts}
\end{equation}
and restrict the transport operator to locally adjacent domains:
\begin{equation}
    \theta(z_j) = \mathcal{T}(\theta(z_i),\, z_i,\, z_j), \quad \forall z_j \in \mathcal{N}(z_i).
    \label{eq:operator_charts}
\end{equation}
The localized construction enables the mapping $z \mapsto \theta(z)$ to remain smooth within each chart, while the union of charts compensates for global non-smoothness from incompleteness. 

\paragraph{Optimization}
NeuralLio is optimized by supervising the transport operator $\mathcal{T}$ over local neighborhoods in the descriptor space. For each training descriptor $z_i$, we sample nearby descriptors $z_j \in \mathcal{N}(z_i)$ and train $\mathcal{T}$ to transport model parameters from $z_i$ to $z_j$ so that the resulting predictions and latent representations remain consistent. During inference, parameters for an unseen domain $z_s$ are inferred by transporting from the nearest training descriptor using the learned operator. Detailed procedures for both training and inference are summarized in Appendix~\ref{app:optimization}. We also provide a detailed model complexity analysis in Appendix \ref{app:complex}.

\section{Experiment}
\label{sec:exp}
In this section, we evaluate the effectiveness of the proposed framework across diverse continuous domain generalization tasks. Our experimental study is designed to answer the following key questions: \textit{1) Can the framework effectively generalize predictive models across continuously evolving domains?} \textit{2) Can the framework faithfully recover the underlying parameter manifold shaped by domain variation?} \textit{3) Can the framework robustly handle descriptor imperfections?} \textit{4) Can the learned transport operator exhibit the desired structural properties?} More detailed results (i.e., dataset details, baseline details, model and hyperparameter configurations, ablation study, scalability analysis, sensitivity analysis, and convergence analysis) are demonstrated in Appendix \ref{app:exp_detail}. 

\textbf{Synthetic Datasets.} Two synthetic datasets are employed to simulate continuous domain shifts under interpretable variations. In the 2-Moons dataset, each domain is generated by applying scaling and rotation to the base moon shape. The descriptor $z = [z^{1}, z^{2}] \in \mathbb{R}^2$ encodes the scale factor and rotation angle, respectively. We train the model on 50 randomly sampled domains, and evaluate it on 150 additional randomly sampled domains, together with extra test domains uniformly sampled over a mesh grid in the descriptor space (see Fig.~\ref{fig:moons_distribution}). In the MNIST dataset, each domain is constructed by applying rotation, color shift (red to blue), and occlusion to digits. The descriptor $z = [z^{1}, z^{2}, z^{3}] \in \mathbb{R}^3$ encodes the intensity of each transformation. We use 50 randomly selected domains for training and another 50 for testing. Visual illustrations are shown in Fig.~\ref{fig:mnist_vis}.

\textbf{Real-world Datasets.} We further evaluate our framework on a diverse collection of multimodal real-world datasets: fMoW, ArXiv, and Yearbook for classification, and Traffic for regression. For each dataset, we construct continuous descriptors using auxiliary information derived from publicly available domain metadata or contextual variables. Comprehensive details of dataset composition, processing, and descriptor construction are provided in Appendix~\ref{app:dataset}.

\textbf{Baselines.} We employ two categories of baselines. The \emph{Descriptor-Agnostic} group operates purely on input–output pairs without domain-level descriptor: ERM, IRM, V-REx, GroupDRO, Mixup, DANN, MLDG, CDANN, and URM. The \emph{Descriptor-Aware} group incorporates explicit domain descriptors to guide generalization: ERM-D, NDA, CIDA\cite{wang2020continuously}, TKNets\cite{zeng2024generalizing}, DRAIN\cite{bai2023temporal}, and Koodos\cite{cai2024continuous}. Details of all comparison methods are provided in Appendix~\ref{app:baseline}.

\textbf{Metrics.} Error rate (\%) is used for classification tasks. Mean Absolute Error (MAE) is used for regression tasks. All models are trained on training domains and evaluated on all unseen test domains. Each experiment is repeated five times, and we report the mean and standard deviation. Full hyperparameter settings and implementation details are provided in Appendix~\ref{app:setting}.

\begin{table}[t]
\centering
\caption{Performance comparison on continuous domain datasets. Classification tasks report error rates (\%) and regression tasks report MAE. 'N/A' implies that the method does not support the task.}
\resizebox{\textwidth}{!}{%
\begin{tabular}{l|ccccc|c}
\toprule
\multirow{2}{*}{\textbf{Model}} & \multicolumn{5}{c|}{\textbf{Classification}} & \multicolumn{1}{c}{\textbf{Regression}} \\
\cmidrule(lr){2-7}
                                 & \textbf{2-Moons}     & \textbf{MNIST}      & \textbf{fMoW}      & \textbf{ArXiv}        & \textbf{Yearbook}         & \textbf{Traffic}\\
\midrule
\multicolumn{7}{c}{\textbf{Descriptor-Agnostic}} \\
\midrule
ERM                               & 34.7 $\pm$ 0.2      & 31.8 $\pm$ 0.9       & 27.7 $\pm$ 1.6   & 35.6 $\pm$ 0.1   & 8.6 $\pm$ 1.0              & 16.4 $\pm$ 0.1  \\
IRM \cite{arjovsky2019invariant}                              & 34.4 $\pm$ 0.2      & 33.0 $\pm$ 0.8       & 41.5 $\pm$ 2.8  & 37.4 $\pm$ 1.0   & 8.3 $\pm$ 0.5              & 16.6 $\pm$ 0.1  \\
V-REx \cite{krueger2021out}                            & 34.9 $\pm$ 0.1      & 32.2 $\pm$ 1.4       & 32.1 $\pm$ 3.6   & 37.3 $\pm$ 0.7   & 8.9 $\pm$ 0.5              & 20.9 $\pm$ 0.6  \\
GroupDRO \cite{sagawa2019distributionally}                         & 34.5 $\pm$ 0.1      & 37.6 $\pm$ 1.0       & 28.6 $\pm$ 1.9   & 35.6 $\pm$ 0.1   & 8.0 $\pm$ 0.4              & 16.2 $\pm$ 0.1  \\
Mixup \cite{yan2020improve}                            & 34.9 $\pm$ 0.1      & 34.0 $\pm$ 0.9       & 	27.1 $\pm$ 1.5   & 35.5 $\pm$ 0.2   & 7.5 $\pm$ 0.5              & 16.1 $\pm$ 0.1  \\
DANN \cite{ganin2016domain}                             & 35.1 $\pm$ 0.4      & 34.7 $\pm$ 0.6       & \underline{26.0 $\pm$ 0.7}  & 36.5 $\pm$ 0.2   & 8.9 $\pm$ 1.4              & 18.1 $\pm$ 0.2  \\
MLDG \cite{li2018learning}                             & 34.6 $\pm$ 0.2      & 85.1 $\pm$ 2.5       & 	29.2 $\pm$ 1.0   & 35.8 $\pm$ 0.2   & 7.7 $\pm$ 0.5              & 16.9 $\pm$ 0.1  \\
CDANN \cite{li2018deep}                            & 35.0 $\pm$ 0.2      & 36.4 $\pm$ 0.8       & 27.6 $\pm$ 0.9  & 36.2 $\pm$ 0.2   & 8.7 $\pm$ 0.4              & 17.3 $\pm$ 0.2  \\
URM \cite{krishnamachariuniformly}                              & 34.7 $\pm$ 0.1      & 31.8 $\pm$ 1.3       & 26.9 $\pm$ 1.0   & 35.5 $\pm$ 0.4   & 8.0 $\pm$ 0.3              & 16.2 $\pm$ 0.2  \\
\midrule
\multicolumn{7}{c}{\textbf{Descriptor-Aware}} \\
\midrule
ERM-D                             & \underline{13.1 $\pm$ 1.5} & 31.7 $\pm$ 0.5      & 28.9 $\pm$ 1.8        & 38.1 $\pm$ 0.6   & 7.4 $\pm$ 0.5              & \underline{15.9 $\pm$ 0.1}      \\
NDA                           & 25.4 $\pm$ 0.3                  & \underline{26.3 $\pm$ 0.7}                 & 31.2 $\pm$ 1.4       & 35.6 $\pm$ 0.6     & 11.0 $\pm$ 0.8 & 17.2 $\pm$ 0.2              \\
CIDA \cite{wang2020continuously}  & 14.2 $\pm$ 1.1 & 27.4 $\pm$ 0.5 & 27.1 $\pm$ 0.9       & \underline{35.3 $\pm$ 0.4}   & 8.4 $\pm$ 0.8              & 16.6 $\pm$ 0.1   \\
TKNets \cite{zeng2024generalizing} & N/A                  & N/A                 & N/A                 & N/A              & 8.4 $\pm$ 0.3              & N/A              \\
DRAIN \cite{bai2023temporal}      & N/A                  & N/A                 & N/A                  & N/A              & 10.5 $\pm$ 1.0             & N/A              \\
Koodos \cite{cai2024continuous}   & N/A                  & N/A                 & N/A                  & N/A              & \underline{6.6 $\pm$ 1.3}  & N/A              \\
\textbf{NeuralLio (Ours)}                              & \textbf{3.2 $\pm$ 1.2}  & \textbf{9.5 $\pm$ 1.1} & \textbf{24.5 $\pm$ 0.5} & \textbf{34.7 $\pm$ 0.4} & \textbf{4.8 $\pm$ 0.3} & \textbf{15.1 $\pm$ 0.1}\\
\bottomrule
\end{tabular}
}
% \vspace{10pt}
\label{tab:continuous_exp}
\end{table}

\begin{figure*}[h]
	\centering
	\includegraphics[width=0.98\textwidth]{./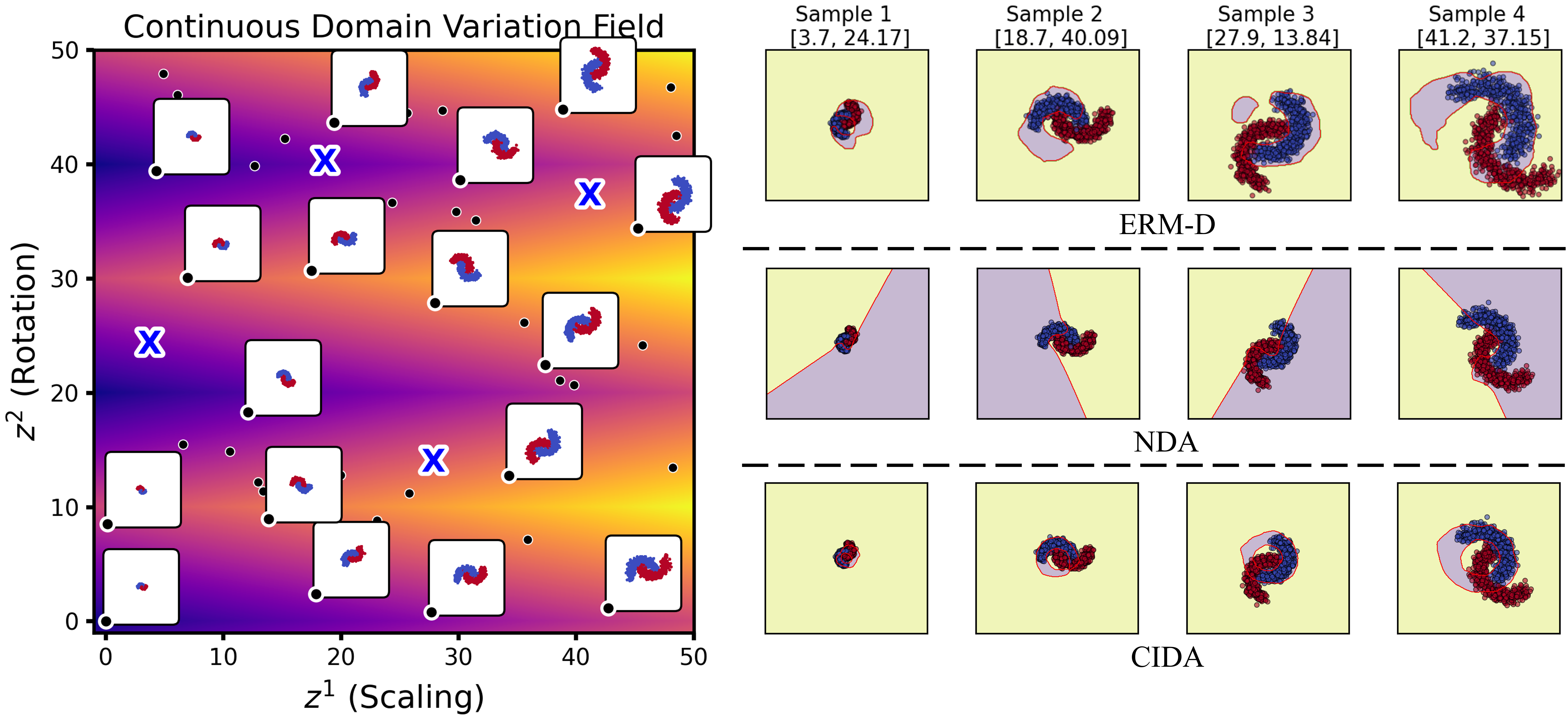}
	\caption{Visualization of generalization behavior of baseline models on the 2-Moons dataset. \textbf{Left:} All training domains (black dots) and selected test domains (blue crosses) in the variation space. \textbf{Right:} Decision boundaries of baseline methods (rows) evaluated at the four test domains (columns).
    }
    \vspace{10pt}
	\label{fig:2moons_vis}
\end{figure*}

\begin{figure*}[h]
	\centering
	\includegraphics[width=0.98\textwidth]{./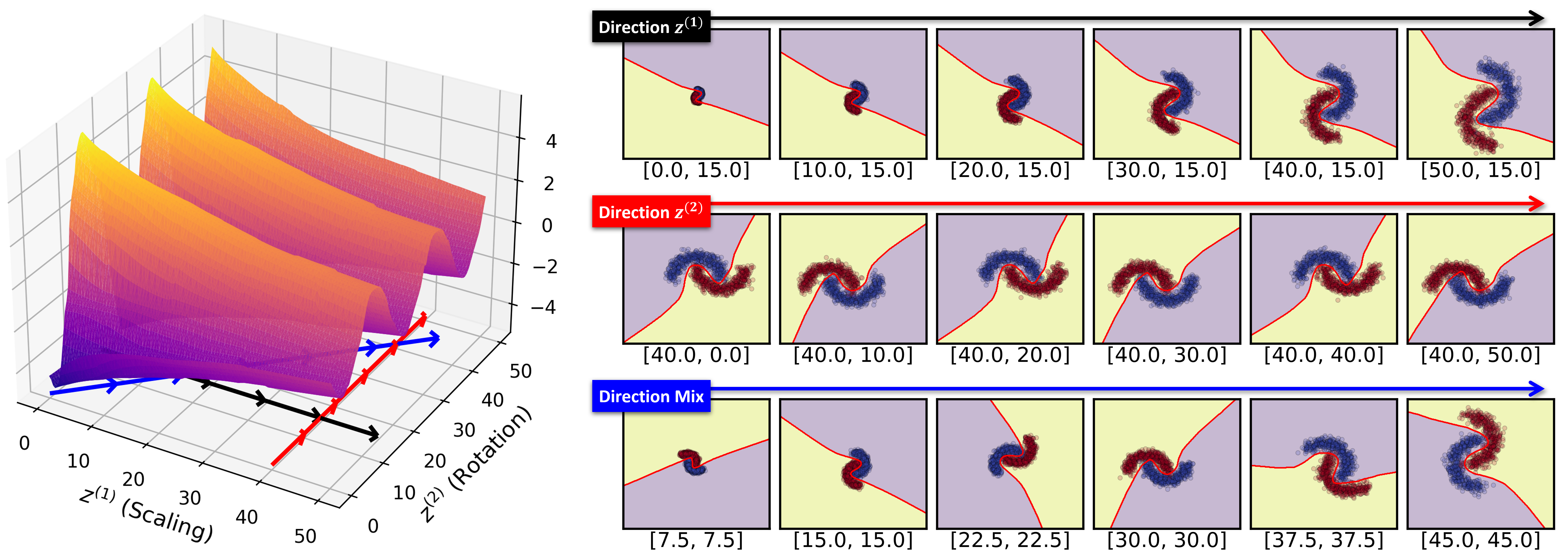}
    \caption{Visualization of the learned parameter manifold and the corresponding generalization behavior. \textbf{Left:} PCA projection of predicted parameters $\theta(z)$ over the entire descriptor space. \textbf{Right:} Visualization of decision boundaries and data samples along selected direction.}
    % \vspace{10pt}
	\label{fig:param_mani}
\end{figure*}

\subsection{Quantitative Analysis}
We present the performance of our proposed method against baseline methods, highlighting results from Table \ref{tab:continuous_exp}. Our method demonstrates strong generalizability across a wide range of continuous domains, with particularly large improvements on 2-Moons (3.2\% vs.\ 13.1\%) and MNIST (9.5\% vs.\ 26.3\%), indicating its effectiveness in modeling continuous domain variation. Beyond synthetic data, NeuralLio also achieves the best results on fMoW, ArXiv, Yearbook, and Traffic, indicating its robustness in handling diverse and irregular real-world domain variation. Table \ref{tab:continuous_exp} also reveals several insights into how existing methods behave: (1) Descriptor-aware methods generally outperform descriptor-agnostic ones, yet naïvely incorporating descriptors (e.g., ERM-D) risks spurious correlations that hinder generalization. (2) Temporal methods are insufficient for general continuous domains. Models designed for temporal domain generalization (i.e., TKNets, DRAIN, Koodos) rely on domain sequence and fail when the descriptor space lacks such an ordering structure. (3) Continuous domain generalization is intrinsically challenging. Even among descriptor-aware methods, the performance of baselines on synthetic datasets is unsatisfactory. No existing method achieves reasonable accuracy. This highlights that prior methods largely overlook the underlying structure of domain variation, which is essential for modeling continuous generalization.

\subsection{Qualitative Analysis}
To further understand how domain variation influences model behavior, we visualize both baseline and learned models on the 2-Moons dataset. Fig.~\ref{fig:2moons_vis} left provides a global view of the descriptor space, where the two axes correspond to scaling (monotonic variation) and rotation (periodic variation). The background color encodes the continuous variation field across domains. All training domains are shown as black dots, with several illustrated in small inset panels. Four unseen test domains are randomly sampled for visualization and marked as blue crosses in the descriptor space. The right panel visualizes the classification boundaries of three baseline methods evaluated on the four test domains. Each row corresponds to a method, and each column to a test domain. ERM-D embeds descriptors directly into the model input, leading to entangled representations. This often induces spurious correlations and overfitting, resulting in highly irregular and fragmented decision boundaries. NDA relies solely on chance overlap between training and test domains, lacking any generalization mechanism. As a result, it produces unstable decision boundaries with no guarantee. CIDA performs adversarial alignment based on descriptor distances and partially captures the scaling effect, visible as a ring-like expansion. However, it fails to model rotational variation because its metric-driven objective lacks structural fidelity. This limitation highlights the inherent weakness of distance-based objectives in preserving domain geometry, particularly under non-isometric transformations.

Fig.~\ref{fig:param_mani} shows an intrinsic view of the parameter manifold learned by NeuralLio on the 2-Moons dataset. Specifically, we train the model using the given training domains, and then densely generalize $\theta(z)$ across the full descriptor space. The left panel visualizes the parameter manifold obtained by projecting all the predicted model parameters $\theta(z)$ via PCA. The resulting “twisted surface” aligns with the monotonic–periodic geometry of scaling and rotation, respectively. The right panel demonstrates that traversing along different descriptor directions corresponds to smooth, consistent transformations in decision boundaries, showing that the learned operator preserves geometric continuity and captures the co-evolution of model parameters and domain distributions. In particular, the mixed-direction case reveals the model’s ability to jointly encode multiple domain variations. In summary, the visualization results firmly demonstrate that our method captures the structural geometry of domain variation and enables interpretable generalization across the descriptor space.

\begin{figure*}[h]
	\centering
	\includegraphics[width=0.98\textwidth]{./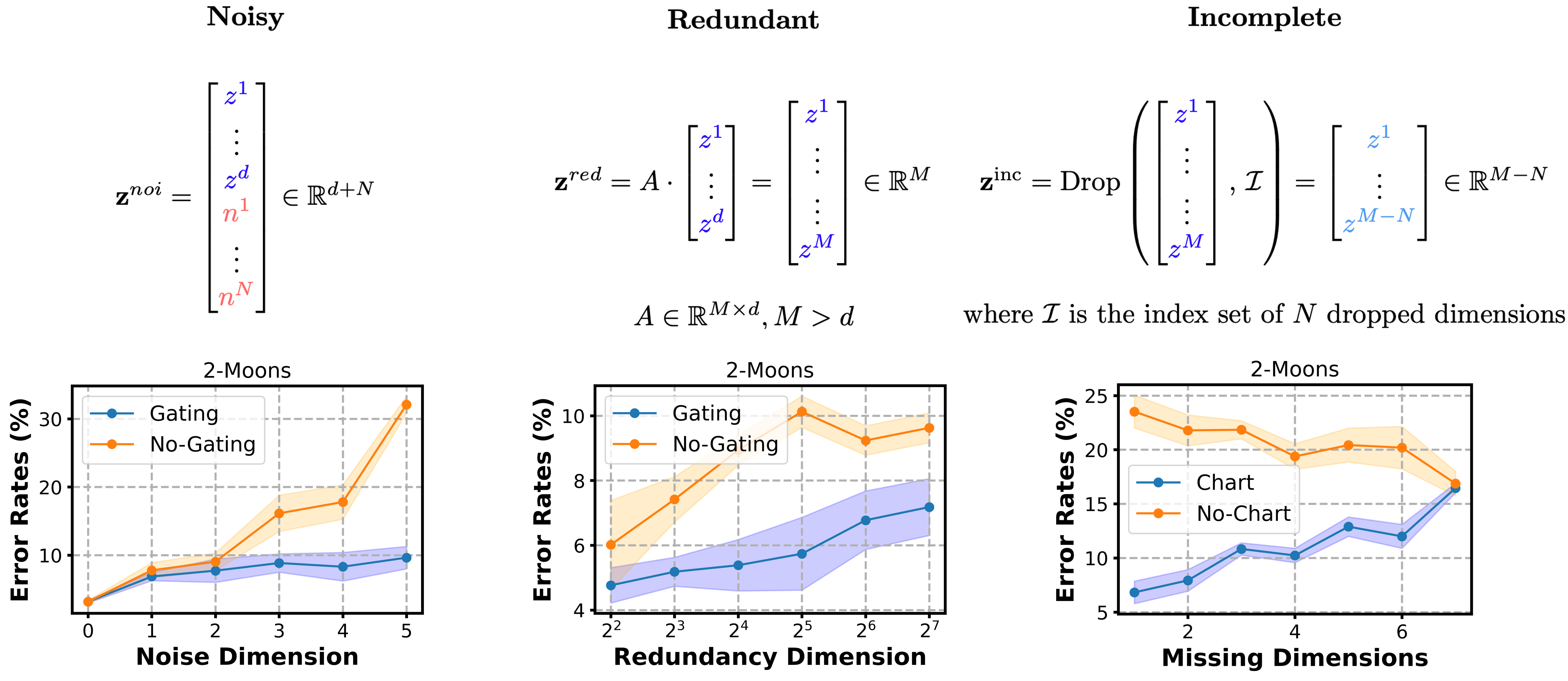}
    \caption{Robustness to imperfect descriptors. \textbf{Top:} Visualization of noisy, redundant, and incomplete descriptor constructions. \textbf{Bottom:} Error rates under increasing imperfection on the 2-Moons dataset.}
    \vspace{10pt}
	\label{fig:imperfection_fig}
\end{figure*}

\subsection{Imperfections Analysis}
\label{sec:imperfect_exp}

We design three controlled experiments to simulate the common imperfections of the domain descriptor: noise, redundancy, and incompleteness, as discussed in Section~\ref{sec:imperfect_descriptors}. To assess the noise,  we augment $z$ with randomly sampled noise dimensions, forming descriptors with uninformative components. To assess redundancy, we apply redundant projections to the original descriptor to obtain a higher-dimensional, over-parameterized version. This setup mimics real-world cases where descriptors are overly entangled. To study incompleteness, we simulate missing information by randomly dropping $n$ dimensions from the redundant descriptor. As $n$ increases, the retained dimensions provide less faithful representation of the underlying domain variation. An overview of the construction is illustrated in the top row of Fig.~\ref{fig:imperfection_fig}.

The bottom row of Fig.~\ref{fig:imperfection_fig} shows the error rates under increasing levels of descriptor imperfection across three settings. In the \textit{Noisy} setting (left), we gradually append up to five random noise dimensions to the original descriptor. As noise increases, the performance of the model without gating rapidly deteriorates, reaching over 30\% error at the highest noise level. In contrast, the model equipped with the gating mechanism remains significantly more stable. This indicates that although the noise dimensions are uninformative, the remaining structure in the descriptor space still guides transport reasonably well, provided that irrelevant dimensions are actively suppressed. In the \textit{Redundant} setting (middle), we expand the descriptor by applying random linear projections to the original descriptor, introducing redundancy without discarding any information. Both models maintain reasonably low error rates (under 10\%), since the underlying domain variation remains fully preserved. However, the model equipped with gating achieves better and more stable performance. This suggests that while redundancy itself is not harmful, the gating mechanism helps by softly recovering the original independent factors of variation, effectively regularizing the projection space and improving robustness. In the \textit{Incomplete} setting (right), we construct an 8-dimensional descriptor space embedding two latent factors of variation, and simulate increasing incompleteness by progressively removing 1 to 7 dimensions. The model without charting (No-Chart) fails once key structural information is removed. In contrast, the chart-based variant exhibits a controlled performance decline. This validates the benefit of local manifold modeling, as restricting transport computation to descriptor neighborhoods allows the chart-based approach to preserve continuity even when global descriptor integrity is compromised.

\subsection{Structure Property Analysis}

To validate the structure properties introduced in Section~\ref{sec:structural_constraint}, we evaluate the learned operator on the 2-Moons dataset. We measure the consistency between theoretically equivalent transformations by computing the cosine similarity of their resulting parameters across all test domains.

\begin{wraptable}{r}{0.5\textwidth}
% \vspace{-10pt}
\centering
\small
\caption{Empirical verification of the structure properties of the learned operator.}
\vspace{4pt}
\begin{tabular}{lc}
\toprule
\textbf{Property} & \textbf{Cosine Similarity (\%)} \\
\midrule
Identity & 99.9 \\
Associativity & 99.1 \\
Invertibility & 98.3 \\
\bottomrule
\end{tabular}
\label{tab:algebraic_validation}
\vspace{-10pt}
\end{wraptable}

\textbf{Identity.} For each test descriptor $z_i$, we apply the self-transport $\theta(z_i) \xrightarrow{z_i \to z_i} \theta'(z_i)$ and compute the cosine similarity: $\cos(\theta(z_i), \theta'(z_i))$.

\textbf{Associativity.} We consider all triplet combinations $(z_i, z_j, z_k)$ from the test descriptors. For each triplet, we compute two transports: $\theta(z_i) \xrightarrow{z_i \to z_j} \theta'(z_j) \xrightarrow{z_j \to z_k} \theta'(z_k)$, and $\theta(z_i) \xrightarrow{z_i \to z_k} \theta''(z_k)$, then evaluate $\cos(\theta'(z_k), \theta''(z_k))$.

\textbf{Invertibility.} We consider all pairs $(z_i, z_j)$ from the test descriptors. For each pair, we apply the round-trip transport $\theta(z_i) \xrightarrow{z_i \to z_j} \theta'(z_j) \xrightarrow{z_j \to z_i} \theta'(z_i)$, and compute $\cos(\theta(z_i), \theta'(z_i))$.

\textbf{Closure.} This property is inherently satisfied, as the operator always maps within the $\mathbb{R}^D$.

As shown in Table~\ref{tab:algebraic_validation}, the learned operator fulfills the expected properties with high precision, which empirically confirms that NeuralLio realizes the Lie-Group-based transport formulation, providing a principled foundation for structurally consistent model evolution.

\section{Conclusion}
\label{sec:conclusion}
This paper presents a unified framework for continuous domain generalization, which aligns structural model parameterization with domain-wise continuous variation. We identify the geometric and algebraic foundations for continuous model evolution and instantiate them through a neural Lie transport operator, which enforces structure-preserving parameter transitions over the variation space. Beyond idealized settings, we further address realistic imperfections in the descriptor space by introducing a gating mechanism to suppress noise, and a local chart construction to handle partial information. We provide both theoretical analysis and extensive empirical validation on synthetic and real-world datasets, demonstrating strong generalization performance, robustness, and scalability. 

Building upon these insights, several directions emerge beyond the current setting. First, many seemingly discrete domain generalization tasks may in fact represent sparse samples of an underlying continuous process governed by latent semantics. Discovering these latent descriptors through unsupervised or generative modeling is appealing. Second, scaling the framework toward more expressive model structures offers a promising avenue for extending its applicability to broader domains. Third, exploring arithmetic operations between models in the functional space may further deepen the understanding of continuous generalization.

% \begin{ack}
% We are grateful to colleagues for helpful discussions and feedback during this research. We also thank the anonymous reviewers and area chairs for their insightful comments and constructive suggestions.
% \end{ack}

\bibliographystyle{plain}
\bibliography{bibliography}

%%%%%%%%%%%%%%%%%%%%%%%%%%%%%%%%%%%%%%%%%%%%%%%%%%%%%%%%%%%%

\appendix
\newpage

\section{Experimental Details}
\label{app:exp_detail}

\subsection{Dataset Details}
\label{app:dataset}
To establish a comprehensive evaluation protocol, we construct a suite of synthetic and real-world datasets designed to cover diverse transformation types, domain structures, data fields, and prediction tasks. The synthetic datasets support controlled analysis of continuous distribution shifts through interpretable transformations. For real-world data, we collect or reorganize large-scale datasets across four domains in vision, language, and spatiotemporal modalities. Each domain is anchored to a real-world geography, time, semantics, or economic context.

\paragraph{Synthetic Datasets} Prior temporal domain generalization studies \cite{wang2020continuously,nasery2021training,bai2023temporal,zeng2024generalizing,cai2024continuous} adopt single-factor domain variations. We instead introduce multiple co-occurring transformations into their widely adopted synthetic dataset, whose combinations induce complex distribution shifts and substantially increase the difficulty of generalization.

\begin{enumerate}
    \item \textbf{2-Moons} This dataset is a variant of the classic 2-entangled moons dataset, where the lower and upper moon-shaped clusters are labeled 0 and 1, and each contains 500 instances. We construct two types of transformations jointly: scaling, which enlarges the spatial extent of the moons by 10\% per unit, and rotation, which rotates the entire structure counterclockwise by 18$^\circ$ per unit. These transformations induce a two-dimensional descriptor $z = [z^{\text{scal}}, z^{\text{rot}}] \in \mathbb{R}^2$. We train on 50 randomly sampled domains, and test on 150 randomly sampled others plus a fixed mesh grid of descriptor points. Examples from the train set are shown in Fig.~\ref{fig:2moons_vis}, and the distribution of train and test domains is visualized in Fig.~\ref{fig:moons_distribution}. The continuous domain shift arises from smooth transformations of the moon-shaped clusters.

    \item \textbf{MNIST} This dataset is a variant of the classic MNIST dataset \cite{deng2012mnist}, where each domain consists of 1,000 digit images randomly sampled from the original MNIST. A combination of three transformations is applied to all samples within each domain: rotation (18$^\circ$ counterclockwise per unit), color shift (from red to blue over the transformation space), and occlusion (increasing coverage ratio per unit). These transformations induce a three-dimensional domain descriptor $z=[z^{\text{rot}}, z^{\text{col}}, z^{\text{occ}}] \in \mathbb{R}^3$. We randomly sample 50 domains for training and another 50 for testing. Fig.~\ref{fig:mnist_vis} illustrates representative training and test domains. The visual appearance of digits exhibits substantial variation due to combinations of transformations, posing a significant challenge for models to generalize.
\end{enumerate}

\paragraph{Real-world Datasets}
\begin{enumerate}
\setcounter{enumi}{2}

    \item \textbf{fMoW} The fMoW dataset~\cite{christie2018functional} consists of over 1 million high-resolution satellite images collected globally between 2002 and 2018. Each image is labeled with the functional purpose of the region, such as airport, aquaculture, or crop fields. We select ten common categories to construct a multi-class classification task and define each country as a separate domain, resulting in 96 domains. For each country, we collect publicly available climate statistics from the CRU climate dataset\footnote{\url{https://crudata.uea.ac.uk/cru/data/hrg/}}, including seasonal-term averages of temperature, precipitation, humidity, and solar radiation. These climate factors influence land use decisions and introduce meaningful distribution shifts driven by environmental and geographic variability. We randomly select 50 domains for training and use the remaining for testing.

    \item \textbf{Arxiv} The arXiv dataset~\cite{clement2019use} comprises 1.5 million pre-prints over 28 years, spanning fields such as physics, mathematics, and computer science. We construct a title-based classification task to predict the paper’s subject. Domains are defined by the publisher that eventually accepted the article, yielding 83 domains. This captures semantic variation across venues; for example, the term “neural” refers to artificial networks in IEEE, but to biological systems in Nature Neuroscience. Publisher metadata is obtained from the open-source OpenAlex \cite{priem2022openalex}. We randomly select 40 domains for training and use the remaining for testing.

    \item \textbf{YearBook} The Yearbook dataset~\cite{yao2022wild} contains frontal portraits from U.S. high school yearbooks spanning 1930 to 2013. The task is to classify gender from facial images. Same as the setting in~\cite{cai2024continuous}, we sample 40 years from the 84-year range, treating each year as a separate domain. The resulting domains are temporally ordered. The first 28 domains are used for training and the remaining for testing. This dataset serves as a one-dimensional temporal testbed, where time is treated as a special case of continuous domain generalization. It enables existing temporal domain generalization methods to be comparable with ours.
    
    \item \textbf{Traffic} We use a real-world taxi flow dataset~\cite{pan2019urban} collected from Beijing, covering the period from February to June 2015. The city is partitioned into 1,024 ($32 \times 32$) regions, and the task is time-series forecasting of future traffic flow based on past hourly taxi inflow and outflow observations. Each region is associated with the distribution of Points of Interest (POIs). We treat each region as a separate domain and use the POIs distribution as its descriptor, as it reflects functional differences in land use that potentially influence local traffic dynamics. We select 100 domains for training and use the remaining for testing.
\end{enumerate}

\begin{figure*}[h]
	\centering
	\includegraphics[width=0.7\textwidth]{./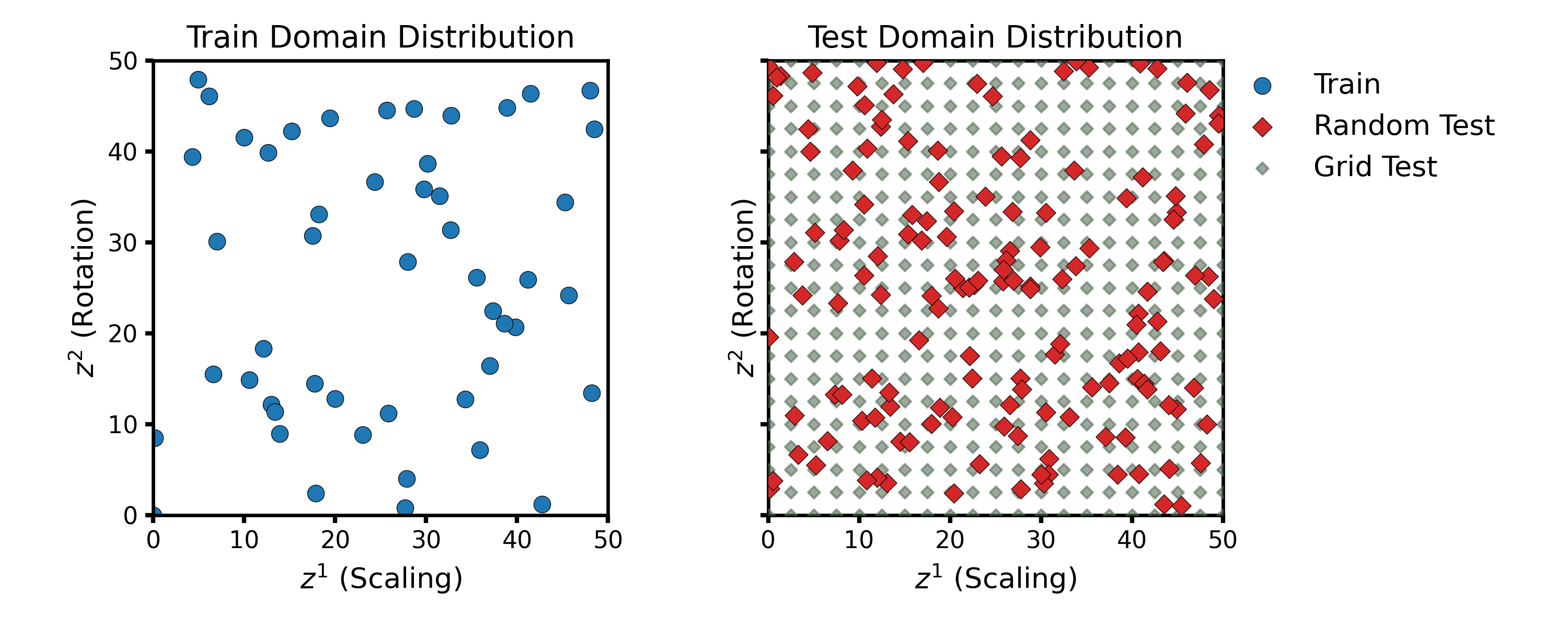}
	\caption{Train and test domain descriptors for the 2-Moons dataset. The left plot shows the 50 training domains, while the right plot shows the 150 randomly sampled test domains and additional test domains uniformly distributed over a fixed mesh grid in the descriptor space.}
	\label{fig:moons_distribution}
\end{figure*}

\begin{figure*}[h]
	\centering
	\includegraphics[width=0.98\textwidth]{./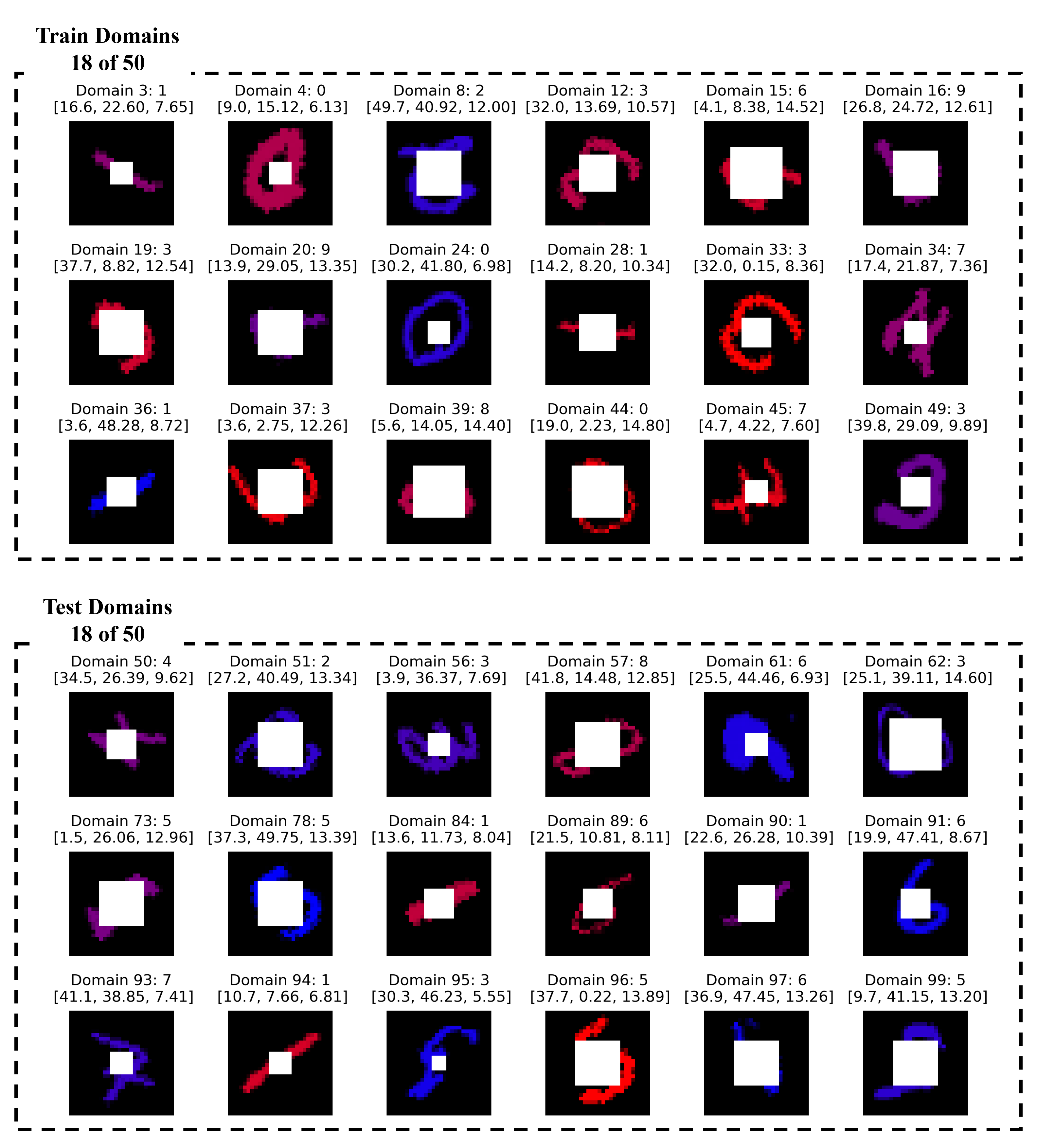}
	\caption{Visualization of the MNIST dataset. Each domain corresponds to a unique combination of three transformations: rotation, color shift, and occlusion. One representative image is sampled from each domain for visualization. The title above each image indicates its domain index and ground-truth label, while the descriptor is shown below. The top and bottom panels respectively show 18 randomly sampled training and test domains.}
	\label{fig:mnist_vis}
\end{figure*}

\subsection{Comparison Methods}
\label{app:baseline}
\paragraph{Descriptor-Agnostic Methods.}
These methods operate purely on input–output pairs and do not utilize domain-level descriptor information:
\begin{itemize}[leftmargin=1em, itemsep=0em, topsep=0em]
    \item ERM (Empirical Risk Minimization): Trains a single model across all domains by minimizing average training loss.
    \item IRM \cite{arjovsky2019invariant}: Encourages invariant predictive relationships across environments by penalizing gradient dependence between input and loss.
    \item V-REx \cite{krueger2021out}: Enforces risk extrapolation to approximate invariant predictors, offering robustness to covariate shift and enabling partial causal identification.
    \item GroupDRO \cite{sagawa2019distributionally}: Adopts a worst-case optimization over domain-specific risks to ensure robustness.
    \item Mixup \cite{yan2020improve}: Applies data interpolation across domains to promote smoothness and generalization for unsupervised domain adaptation.
    \item DANN \cite{ganin2016domain}: Introduces domain adversarial training to align latent representations across domains.
    \item MLDG \cite{li2018learning}: Performs meta-learning by simulating domain shifts during training via meta-train/meta-test splits.
    \item CDANN \cite{li2018deep}: An adversarial method that conditions on domain labels to enforce domain-invariant features.
    \item URM \cite{krishnamachariuniformly}: A recent invariant learning method that enforces uniformity across both data distributions and feature representations.
\end{itemize}

\paragraph{Descriptor-Aware Methods.}
These methods incorporate explicit domain descriptors to guide model generalization across domains:
\begin{itemize}[leftmargin=1em, itemsep=0em, topsep=0em]
    \item ERM-D: An extension of Empirical Risk Minimization that incorporates the domain descriptor as an auxiliary input. Specifically, the descriptor is first encoded via a shallow MLP and then concatenated with the feature representation of the input data. This allows the network to condition its prediction on domain-specific signals.
    \item NDA (Nearest Domain Adaptation): This method first trains a global model on all training domains, then fine-tunes it individually for each domain. For test-time prediction, it selects the closest training domain in descriptor space and directly uses its fine-tuned model.
    \item CIDA \cite{wang2020continuously}: CIDA addresses domain adaptation for continuously indexed domains. It employs a descriptor-conditioned discriminator that predicts the domain index value from encoded features. The encoder is trained adversarially to obfuscate this prediction, encouraging domain-invariant representations across domains. In our setup, we adopt the multi-dimensional extension of CIDA, adapting the discriminator to regress continuous descriptors.
    \item TKNets \cite{zeng2024generalizing}: Learns the temporal evolution of the model by mapping them into a Koopman operator-driven latent space, where the dynamics are assumed to be linear. Each temporal domain has a shared latent linear operator that governs its transitions.
    \item DRAIN \cite{bai2023temporal}: Models the temporal evolution of predictive models under distributional drift using a recurrent neural architecture. Specifically, DRAIN employs an LSTM-based controller to dynamically generate model parameters conditioned on temporal dependency, enabling the model to adapt to continuously shifting data distributions without access to future domains.
    \item Koodos \cite{cai2024continuous}: Models the continuous evolution of model parameters with neural ODEs, where the dynamics are driven by temporal dependency. It synchronizes the progression of data distributions and model parameters through a latent learnable dynamics.
\end{itemize}

\subsection{Model Configuration}
\label{app:setting}

We provide full implementation details and code in our repository\footnote{https://github.com/Zekun-Cai/NeuralLio}. Our model comprises four jointly trained components: a per-domain predictive model, a parameter encoder-decoder pair, a neural transport operator, and a gating mechanism. We adopt a task-specific predictive architecture (detailed below) shared across all domains. Each training domain is associated with its own parameter vector, while maintaining structural consistency. The parameters encoder and decoder are four-layer MLPs with ReLU activations. The gating mechanism consists of a linear transformation and a learnable mask. The neural transport operator is implemented as an exponentiated vector field over the descriptor space. It includes two sub-networks: a \emph{field network} that maps the source descriptor $z_i$ to a set of basis matrices, and a \emph{coefficient network} that maps the descriptor difference $\Delta z = z_j - z_i$ to a set of scalar weights. The resulting transformation is computed by sequentially applying matrix exponentials of the scaled basis matrices as described in Eq.~\ref{eq:cascaded_exp}. For models with large parameter sizes, we introduce a shared feature extractor and infer only the remaining domain-specific parameters. All experiments are conducted on a 64-bit machine with two 20-core Intel Xeon Silver 4210R CPUs @ 2.40GHz, 378GB memory, and four NVIDIA GeForce RTX 3090 GPUs. Results are averaged over five runs with different random seeds using the Adam optimizer. We specify the task-specific predictive architecture for each dataset as follows:

\begin{enumerate}
    \item \textbf{2-Moons} The predictive model is a three-layer MLP with 50 hidden units per layer and ReLU activations. The encoder and decoder are both four-layer MLPs with layer dimensions $[1024, 512, 128, 32]$. The transport operator consists of a 32-dimensional linear field network with 2 basis matrices. The learning rate is set to $1 \times 10^{-3}$.
    
    \item \textbf{MNIST}  The shared feature extractor is a convolutional backbone composed of three convolutional layers with channels $[32, 32, 64]$, each followed by a ReLU activation and a max pooling layer with kernel size 2. The resulting features are flattened and passed through a dropout layer. The per-domain predictive model is a two-layer MLP with a hidden dimension of 128 and an output dimension of 10. The encoder and decoder are four-layer MLPs with layer dimensions $[1024, 512, 128, 32]$. The neural transport operator is implemented as a 32-dimensional linear field network with 3 basis matrices. The learning rate is set to $1 \times 10^{-3}$ for all components.

    \item \textbf{fMoW} ResNet-50 backbone pretrained on ImageNet as the shared feature extractor to capture high-level semantics. The extracted features are fed into a per-domain predictive model implemented as a three-layer MLP with hidden dimensions $[128, 64]$ and an output dimension of 10. The encoder and decoder are four-layer MLPs with dimensions $[1024, 512, 128, 32]$. The neural transport operator is implemented as a 128-dimensional linear field network with 5 basis matrices. The learning rate is set to $1 \times 10^{-3}$.

    \item \textbf{Arxiv} Each paper title is first embedded using a SentenceTransformer encoder, resulting in a 384-dimensional representation. The per-domain predictive model is a three-layer MLP with hidden dimensions $[50, 50]$ and output dimension 10. The encoder and decoder are four-layer MLPs with layer dimensions $[1024, 512, 128, 32]$. The neural transport operator is implemented as a 32-dimensional linear field network with 5 basis matrices. The learning rate is set to $1 \times 10^{-3}$ for all components.

    \item \textbf{YearBook} The shared feature extractor is a convolutional backbone composed of three convolutional layers with channels $[32, 32, 64]$, each followed by a ReLU activation and a max pooling layer. The output is flattened and passed through a dropout layer. The per-domain predictive model is a three-layer MLP with hidden dimensions $[128, 32]$ and output dimension 2. The encoder and decoder are four-layer MLPs with dimensions $[1024, 512, 128, 32]$. The neural transport operator is implemented as a 32-dimensional linear field network with 5 basis matrices. The learning rate is set to $1 \times 10^{-3}$.

    \item \textbf{Traffic} The predictive model is a three-layer MLP that takes as input a flattened 96-dimensional vector representing 48 historical inflow and outflow pairs, and outputs a 6-dimensional vector corresponding to 3-step future predictions. The hidden dimension is set to 64. The encoder and decoder are four-layer MLPs with dimensions $[1024, 512, 128, 32]$. The neural transport operator is implemented as a 32-dimensional linear field network with 5 basis matrices. The learning rate is set to $1 \times 10^{-3}$ for all components.

\end{enumerate}

\subsection{Complexity Analysis}
\label{app:complex}
In our framework, all domains share a common feature extractor, while each domain is associated with a domain-specific parameter vector $\theta \in \mathbb{R}^D$. This vector is first encoded into a low-dimensional latent embedding $e$ through a neural autoencoder implemented as an MLP. The embedding $e$ is then updated by a neural transport operator defined over the descriptor space and subsequently decoded back into $\theta$ to produce the generalized model.

The overall complexity of this transformation is $\mathcal{O}(2(Dn + E) + F)$, where $n$ is the width of the first encoder layer, $E$ is the total number of parameters in the remaining layers of the encoder, and $F$ denotes the parameters in the transport and gating modules. Specifically, $F$ includes the field network, which maps the source descriptor to a small set of basis matrices in the latent space, and the coefficient network, which maps descriptor differences to scalar weights over those basis matrices. Since both networks operate in a low-dimensional space (i.e., the same dimension as $e$), their parameter sizes remain small. The gating module is implemented as a lightweight mask vector over descriptor dimensions and adds negligible overhead. In practice, $D$ is also small because most model parameters, such as those in convolutional or transformer-based feature extractors, are shared across domains. Domain-specific parameters are usually applied only to a few final layers. Consequently, the overhead introduced by our framework remains controlled.

\subsection{Ablation Study}
\label{app:ablation}
To understand the contribution of each core component in our framework, we conduct a series of ablation studies on the 2-Moons and Traffic datasets. We focus on the importance of structure-aware transport design, gating mechanisms, and local chart-based inference. All variants are trained and evaluated under the same protocol as the main experiments.
\begin{wraptable}{r}{0.5\textwidth}
\vspace{-5pt}
\centering
\small
\caption{Ablation test results for different datasets.}
\begin{tabular}{l|c|c}
\toprule
\textbf{Ablation} & \textbf{2-Moons} & \textbf{Traffic}\\
\midrule
$\mathcal{T}-Plain$ & 13.3 $\pm$ 0.4 & 16.5 $\pm$ 1.0 \\
$\mathcal{T}-noLie$  & 34.8 $\pm$ 0.5  & 16.4 $\pm$ 0.3\\
w/o Gating  & -   & 16.3 $\pm$ 0.4\\
w/o Chart  & -   & 16.1 $\pm$ 0.3\\
NeuralLio & \textbf{3.2 $\pm$ 1.2}  & \textbf{15.1 $\pm$ 0.1} \\
\bottomrule
\end{tabular}
\label{tab:ablation}
\vspace{-5pt}
\end{wraptable}
We achieve the following variants, results can be found in Table~\ref{tab:ablation}.
(1) \textbf{Plain}: replaces the neural Lie transport operator with a plain MLP that directly maps $(z_i, z_j, e_i)$ to $e_j$, termed as $\mathcal{T}-Plain$. The performance of the $\mathcal{T}-Plain$ variant demonstrates the limitations of removing geometric and algebraic constraints, as it consistently underperforms our full model across all datasets. The lack of structural bias causes the transport function to overfit to training tuples, failing to extrapolate to unseen domains. (2) \textbf{noLie}: bypass the exponential mapping that projects Lie algebra elements to the Lie Group. Instead, the descriptor difference $\Delta z$ is first applied to the field network to produce a raw linear operator, which is then applied directly to the source embedding $e_i$ to obtain $e_j$. $\mathcal{T}-noLie$ variant reduces the transformation to a descriptor-conditioned linear mapping without enforcing any group structure. It leads to severe performance degradation, especially on the 2-Moons dataset, and remains suboptimal on the real-world dataset. This highlights the importance of algebraic structure, without which the learned transport fails to align reliably with the underlying domain variation. We further ablate two structural modules designed to handle imperfect descriptors. Disabling the gating mechanism leads to performance drops under noisy or redundant descriptors, indicating its role in suppressing irrelevant dimensions. Removing charting similarly impairs generalization under incomplete descriptors, as transport lacks global consistency. Both components are critical for robust modeling under imperfect descriptor conditions. More detailed evaluations of these modules are presented in Section~\ref{sec:imperfect_exp}.

\begin{figure*}[h]
	\centering
	\includegraphics[width=0.8\textwidth]{./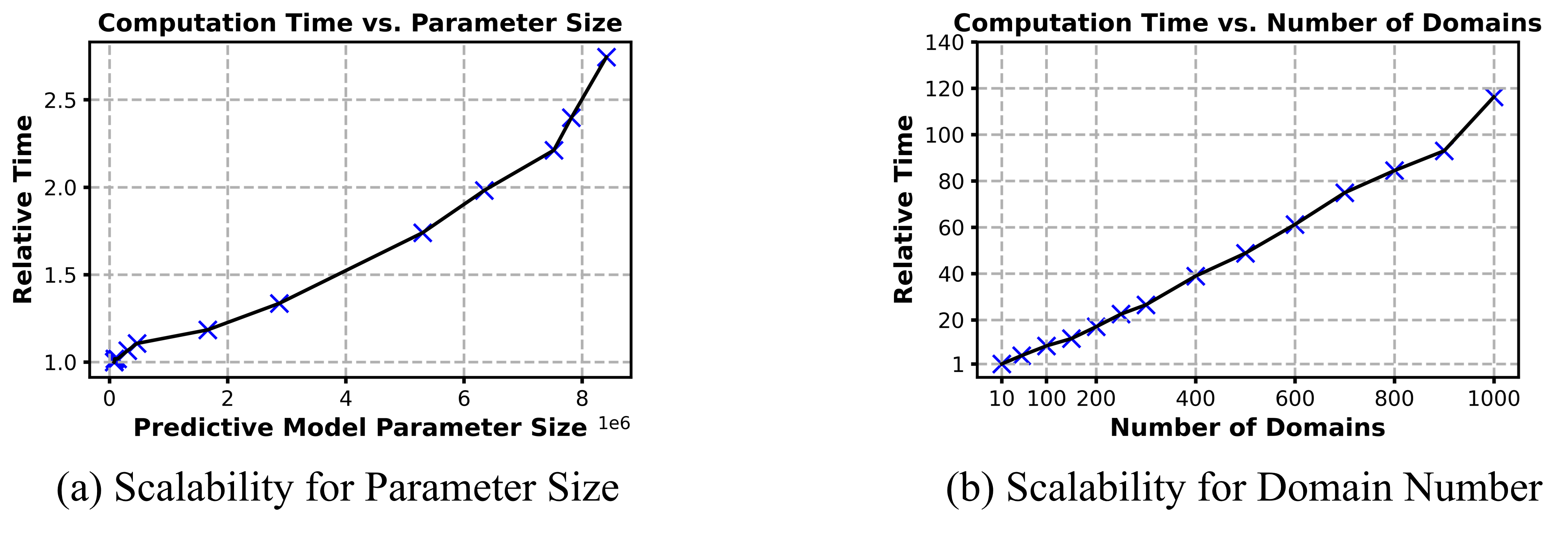}
	\caption{Scalability analysis w.r.t the number of parameters and the number of domains.}
	\label{fig:scalability}
\end{figure*}

\subsection{Scalability Analysis}
\label{app:scalability}

We evaluate the scalability of our framework with respect to two key factors: the number of prediction model parameters and the number of domains. Computation time is measured under increasing model and domain complexity, normalized by the shortest time to provide a consistent basis. 

Fig.~\ref{fig:scalability}~(a) reports computation time on the fMoW dataset as a function of prediction model size, varied from 80K to 9M parameters by adjusting network depth and width. The growth in runtime is smooth and nearly linear, which aligns with the theoretical complexity described in Section~\ref{app:complex}. Fig.~\ref{fig:scalability}~(b) presents the runtime as a function of the number of domains, evaluated on the 2-Moons dataset with domain counts ranging from 10 to 1000. Domains are sampled from the descriptor space with a fixed number of data points per domain to ensure comparability. The runtime increases linearly with domain count, reflecting the scalability of our per-domain inference pipeline and supporting its applicability to large-scale domain generalization tasks.

\begin{table}[ht]
\centering
\caption{Cost of training and testing time.}
\begin{tabular}{l|c|c}
\toprule
\textbf{Model} & \textbf{Train Time (s)} & \textbf{Test Time (s)} \\
\midrule
CIDA & 711 & 0.6 \\
DRAIN & 144 & 0.2 \\
TKNets & 1250 & 0.3 \\
Koodos & 540 & 0.1 \\
Ours & 694 & 0.1 \\
\bottomrule
\end{tabular}
\label{tab:run_time}
\end{table}

We further benchmark the wall-clock training time of our method against representative baselines that explicitly model inter-domain relationships, including CIDA \cite{wang2020continuously}, DRAIN\cite{bai2023temporal}, TKNets \cite{zeng2024generalizing}, and Koodos \cite{cai2024continuous}. To support a broader set of baseline methods, we conduct this comparison on the YearBook dataset. All models are trained under identical hardware and epoch configurations. As shown in Table~\ref{tab:run_time}, our method matches the training efficiency of prior structure-aware models (e.g., Koodos, CIDA) while achieving the lowest testing cost. This reflects a well-balanced trade-off of our model between computational efficiency and effectiveness.

\begin{figure*}[h]
	\centering
	\includegraphics[width=0.98\textwidth]{./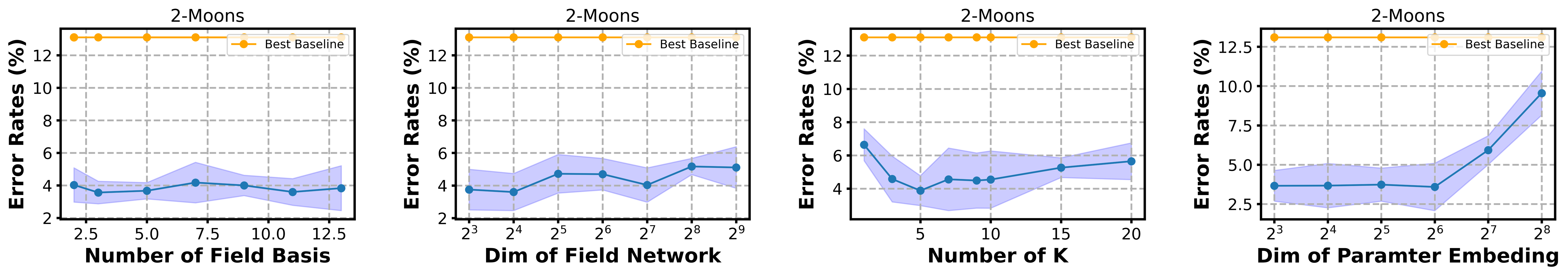}
	\caption{Sensitivity analysis. From left to right: number of field basis vectors used in the transport operator, hidden dimension of the field network, number of neighbors $K$ in the local chart module, and the dimension of the parameter embedding $e$.}
	\label{fig:sensitivity}
\end{figure*}

\subsection{Sensitivity Analysis}
\label{app:sensitive}
We conduct a sensitivity analysis on the 2-Moons dataset to evaluate the robustness of our method to key hyperparameters. Specifically, we vary four components: (1) the number of field network $f_v^k$ in the operator, denoted as the \emph{Number of Field Basis}; (2) the hidden dimension of the MLP implementing the field network, denoted as the \emph{Dim of Field Network}; (3) the number of neighbors $K$ used in the local chart module; and (4) the embedding dimension of predictive model parameters $e$ after encoding, denoted as the \emph{Dim of Parameter Embedding}.

As shown in Fig.~\ref{fig:sensitivity}, our model exhibits strong robustness across a broad range of values. The error rate remains stable when varying the number of basis vectors and the hidden dimension of the field network, indicating flexibility in the transport capacity. For $K$, performance is consistent for moderate neighborhood sizes (e.g., $K = 5 \sim 10$), while very small $K$ leads to degradation due to under-smoothing. For the embedding dimension of $e$, we observe an optimal range below 128, which show that overly large values introduce overparameterization. These results show that our model outperforms the baseline model over a wide range of hyperparameters, confirming the robustness and adaptability of our framework.

\begin{figure*}[h]
	\centering
	\includegraphics[width=0.98\textwidth]{./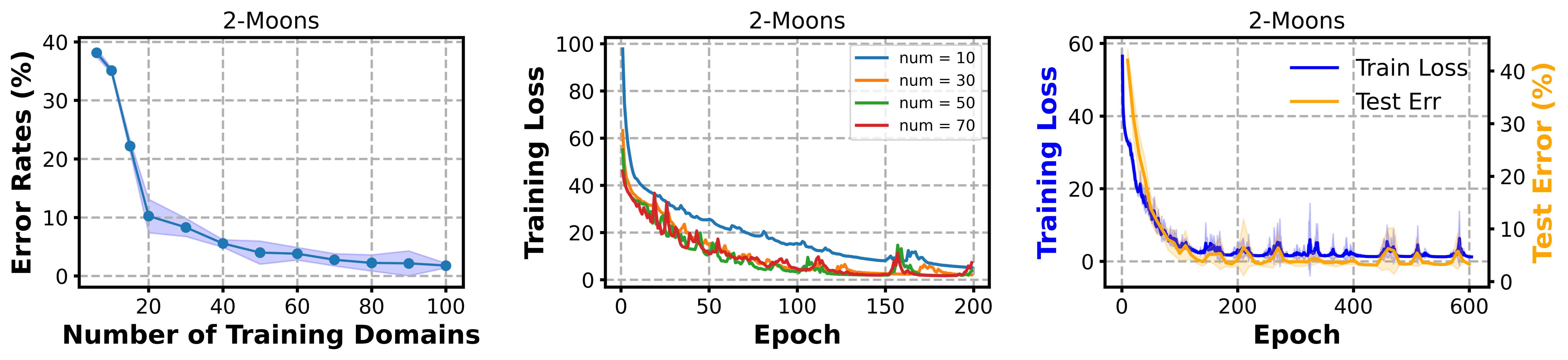}
	\caption{Convergence analysis. \textbf{Left:} Test error (\%) on 2-Moons as a function of the number of training domains. \textbf{Middle:} Training loss curves for different numbers of training domains. \textbf{Right:} Training loss and test error over long epochs.}
	\label{fig:convergence}
\end{figure*}

\subsection{Convergence Analysis}
\label{app:convergence}
To evaluate the learning behavior and convergence properties of our framework, we conduct a series of controlled experiments on the 2-Moons dataset. We design three complementary analyses: (1) convergence with respect to the number of training domains, (2) convergence trajectories across epochs under varying the number of domains, and (3) joint dynamics of training loss and test performance over extended training.

\paragraph{Convergence results across different domain counts.} We evaluate convergence under increasing numbers of training domains on the 2-Moons dataset. Specifically, we vary the number of training domains from 5 to 100, while fixing the test domains on a predefined mesh grid to ensure consistent evaluation. As shown in the left panel of Fig.~\ref{fig:convergence}, the test error drops rapidly as the number of training domains increases. With as few as 20 training domains, the model already achieves a test error below 10\%, and beyond 50 domains, the error consistently falls below 5\%. At 80 domains, the model shows near-perfect generalization. These results demonstrate that our model can recover the underlying parameter field even under very sparse supervision, and increasing the number of training domains is crucial for capturing the variation structure.

\paragraph{Convergence trajectory across different domain counts.} The center panel of Fig.~\ref{fig:convergence} illustrates the training loss trajectories over epochs under different numbers of training domains (10,30,50,70). Across all settings, the loss consistently decreases and stabilizes after around 120 epochs, demonstrating the convergence stability of our framework. Moreover, as the number of domains increases, the model not only converges to a lower final loss (insight from the previous paragraph) but also reaches convergence more rapidly. This confirms that greater domain coverage not only improves model accuracy but also accelerates optimization.

\paragraph{Joint dynamics of training loss and test performance.} The right panel of Fig.~\ref{fig:convergence} plots both the training loss and test error under our main experimental setting, with over 600 epochs. The model converges rapidly and maintains consistent test performance throughout training, showing no signs of performance degradation. This stability arises from the structural constraints imposed by the architecture, which regularize domain-wise variation and prevent the model from memorizing individual domains. This observation aligns with our theoretical analysis in Property~\ref{prt:regular}, suggesting that enforcing structural constraints naturally limits model overfitting and supports robust generalization.

\subsection{Limitations}
\label{app:Limitations}
Our framework assumes the availability of continuous descriptors to model domain variation. Two practical challenges arise in this context. First, many domain generalization tasks are still formulated under discrete domain assumptions. However, we contend that such seemingly discrete observations reflect limited observations over an implicit continuous process. For example, real, cartoon, and sketch domains likely lie along a visual continuum from realism to abstraction; formal and informal text styles mark the endpoints of a stylistic spectrum; and related languages such as English, German, and Dutch reflect gradual linguistic evolution. The success of recent diffusion models also supports this view. For such cases, the challenge is not discontinuity, but how to coordinate observed domains within the implicit continuous space. Second, in more extreme scenarios, descriptors may be entirely unavailable or unobservable. While this is uncommon in structured environments, automatic descriptor inference is possible. Recent advances in contrastive representation learning and adversarial domain embedding provide promising tools to estimate latent descriptors directly from the input. These mechanisms can be readily integrated into our framework as additional descriptor discriminators or encoder networks.

Overall, we view domain descriptor construction as a key and promising direction. Building on our current framework, such extensions can further expand the applicability of continuous domain generalization to more weakly-supervised environments.

\clearpage
\section{Optimization Details}
\label{app:optimization}
The joint training strategy follows the general optimization paradigm established in \cite{cai2024continuous}, which has been shown to be principled and empirically validated for coordinating the temporal evolution of model parameters. Our method generalizes its idea to a structural transport operator across descriptor space, extending its applicability beyond purely temporal settings. Specifically, we jointly optimize the per-domain parameters ${\theta(z_i)}$, encoder $f_e$, decoder $f_d$, neural transport operator $\mathcal{T}$, and gating mechanism. For each training descriptor $z_i$, we supervise its neighborhood $\mathcal{N}(z_i)$ by aligning predictions, latent representations, and transported parameters between $(z_i, z_j)$ pairs. This formulation ensures local consistency and smooth transitions across the parameter manifold. During inference, we identify the nearest training descriptor for a test domain and apply the transport operator to infer its model parameters. The detailed training and inference procedure are summarized in Algorithm~\ref{al:train_test}.

\begin{algorithm}[ht]
\DontPrintSemicolon
\caption{Training and Inference Procedure of NeuralLio}
\label{al:train_test}

\KwIn{Domain set $\{(X_i, Y_i)\}_{i=1}^N$ with descriptors $\{z_i\}_{i=1}^N$; Predictive model architecture $g$;\\
\hspace{2.8em} Test data $X_s$ with descriptor $z_s$}

\KwOut{Learned parameters $\{\theta(z_i)\}_{i=1}^N$; Encoder $f_e$, Decoder $f_d$, Operator $\mathcal{T}$; Gating $\mathbf{g}$;\\
\hspace{3.4em} Test prediction $\hat{Y}_s$}

\tcp*{--- Training phase ---}
\textbf{Initialize modules:} initialize per-domain parameters $\{\theta(z_i)\}_{i=1}^N$, encoder $f_e$, decoder $f_d$, transport operator $\mathcal{T}$, and gating $\mathbf{g}$\;
\textbf{Build neighborhoods:} for each $z_i$, compute $k$-nearest neighbors $\mathcal{N}(z_i)$ using Euclidean distance\;

\ForEach{training iteration}{
  Sample minibatch $\mathcal{B}$ of domain indices\;
  \ForEach{$i \in \mathcal{B}$}{
    Fetch $(X_i, Y_i, \theta(z_i))$; encode $e(z_i) = f_e(\theta(z_i))$; reconstruct $\hat{\theta}(z_i) = f_d(e(z_i))$\;
    Compute losses: $\mathcal{L}_{\text{pred}}^i = \ell(g(X_i; \theta(z_i)), Y_i)$, \quad $\mathcal{L}_{\text{recon}}^i = \| \hat{\theta}(z_i) - \theta(z_i) \|^2$\;
    
    \ForEach{$z_j \in \mathcal{N}(z_i)$}{
      Fetch $(X_j, Y_j, \theta(z_j))$\;
      Gating: $\mathbf{m}(z_i) = \mathrm{Sigmoid}(W z_i) \odot \mathbf{w}$, $\tilde{z}_i = z_i \odot \mathbf{m}(z_i)$, $\tilde{z}_j = z_j \odot \mathbf{m}(z_i)$\;
      Predict $\hat{e}(z_j) = \mathcal{T}(e(z_i), \tilde{z}_i, \tilde{z}_j)$; decode $\hat{\theta}(z_j) = f_d(\hat{e}(z_j))$\;
      Compute $\mathcal{L}_{\text{pred}}^{ij} = \ell(g(X_j; \hat{\theta}(z_j)), Y_j)$\;
      Compute $\mathcal{L}_{\text{consist}}^{ij} = \| \hat{\theta}(z_j) - \theta(z_j) \|^2$\;
      Compute $\mathcal{L}_{\text{embed}}^{ij} = \| \hat{e}(z_j) - f_e(\theta(z_j)) \|^2$\;
    }
  }
  Update all modules with total loss: $\mathcal{L} = \sum_i (\mathcal{L}_{\text{pred}}^i + \mathcal{L}_{\text{recon}}^i) + \sum_{(i,j)} (\mathcal{L}_{\text{pred}}^{ij} + \mathcal{L}_{\text{consist}}^{ij} + \mathcal{L}_{\text{embed}}^{ij})$
}

\tcp*{--- Inference phase ---}
Find nearest $z_i$ to $z_s$ and fetch $\theta(z_i)$; encode $e(z_i) = f_e(\theta(z_i))$\;
Gating: $\mathbf{m}(z_i) = \mathrm{Sigmoid}(W z_i) \odot \mathbf{w}$, $\tilde{z}_s = z_s \odot \mathbf{m}(z_i)$\;
Generalize: $\hat{e}(z_s) = \mathcal{T}(e(z_i), \tilde{z}_i, \tilde{z}_s)$; $\hat{\theta}(z_s) = f_d(\hat{e}(z_s))$\;
Predict: $\hat{Y}_s = g(X_s; \hat{\theta}(z_s))$
\end{algorithm}

%%%%%%%%%%%%%%%%%%%%%%%%%%%%%%%%%%%%%%%%%%%%%%%%%%%%%%%%%%%%

% \clearpage
% \input{checklist}

\end{document}